\newtheorem{proposition}{Proposition}
\def\calP{\mathcal{P}}
\def\tm{\tilde{m}}
\def\calE{\mathcal{E}}
\def\tw{\tilde{w}}
\def\tp{\tilde{p}}
\def\tq{\tilde{q}}
\def\eqdef{\mathrm{:=}}
\def\ML{\mathrm{ML}}
\def\MLE{\mathrm{MLE}}
\def\KL{\mathrm{KL}}
\def\IS{\mathrm{IS}}
\def\dnu{\mathrm{d}\nu}
\def\dmu{\mathrm{d}\mu}
\def\dx{\mathrm{d}x}
\def\calN{\mathcal{N}}
\def\calX{\mathcal{X}}
\def\calY{\mathcal{Y}}
\def\calM{\mathcal{M}}
\def\calO{\mathcal{O}}
\def\calA{\mathcal{A}}
\def\bbR{\mathbb{R}}
\def\inner#1#2{ \langle {#1},{#2} \rangle }
\newenvironment{proof}{\noindent {Proof:}}{\hfill$\square$}
\def\Bhat{\mathrm{Bhat}}
\def\TV{\mathrm{TV}}
 \def\JS{\mathrm{JS}}
\def\eps{\epsilon}
\newtheorem{definition}{Definition}
\newtheorem{theorem}{Theorem}
\newtheorem{corollary}{Corollary}
\newtheorem{lemma}{Lemma}
\title{On $w$-mixtures:\\ Finite convex combinations of prescribed component distributions\footnote{A preliminary version of these results appeared in~\cite{wmixturegeometry-2018} (IEEE ICASSP 2018).}}
\date{}
\author{Frank Nielsen\footnote{Frank Nielsen is with Sony Computer Science Laboratories Inc (Japan). E:mail: {\tt Frank.Nielsen@acm.org}}
\and
Richard Nock\footnote{Richard Nock is with Data61, the Australian National University, and the University of Sydney, Australia. {\tt Richard.Nock@data61.csiro.au}}
}
\begin{document}
\maketitle
 
\begin{abstract}
We consider the space of $w$-mixtures which is defined as the set of finite statistical mixtures sharing the same prescribed component distributions closed under convex combinations.
The information geometry induced by the Bregman generator set to the Shannon negentropy on this space yields a dually flat space called
the mixture family manifold. 
We show how the Kullback-Leibler (KL) divergence can be recovered from the corresponding Bregman divergence for the negentropy generator:
That is, the KL divergence between any two $w$-mixtures amounts to a Bregman Divergence (BD) induced by the Shannon negentropy generator. 
Thus the KL divergence between two Gaussian Mixture Models (GMMs) sharing the same Gaussian components is equivalent to a Bregman divergence.
This KL-BD equivalence on a mixture family manifold implies that we can perform optimal KL-averaging aggregation of $w$-mixtures without information loss. 
More generally, we prove that the statistical skew Jensen-Shannon divergence between $w$-mixtures is equivalent to a skew Jensen divergence between their corresponding parameters. 
Finally, we state several properties, divergence identities, and divergence inequalities relating to $w$-mixtures.
\end{abstract}
 
\noindent Keywords:
Mixture family manifold; Shannon negentropy; Kullback-Leibler divergence; Bregman divergence;  Legendre-Fenchel divergence; \mbox{$f$-divergences}; total variation; Jensen-Shannon divergence; Jensen divergence; information geometry; distributed statistical estimation.

\section{Introduction}

In the field of statistics, finite mixtures~\cite{statmixture-2004}  are {\em semi-parametric models} defined according to weighted component distributions.
When the component distributions belong to a same parametric family of distributions, the mixture is said {\em homogeneous}, otherwise it is said
 {\em heterogeneous}. 
For example, a {\em Mixture of Gaussians} (MoGs, i.e., a mixture of normal distributions)  is a homogeneous mixture
commonly called a {\em Gaussian Mixture Model}~\cite{Goldberger-2005} (GMM).
A mixture of a Laplace distribution with a Gaussian distribution is an example of an heterogeneous mixture.

In this work, we shall consider $w$-mixtures which are convex weighted combinations of {\em fixed} component distributions.

Let $M_+^1(\Omega)$ denote the space of probability measures defined on a $\sigma$-algebra $\Omega$ of a sample space $\calX$.
Consider a {\em positive base measure} $\mu\in M_+^1(\Omega)$ (e.g., the Lebesgue measure  or the counting measure), 
and let $P_0, \ldots, P_{k-1}$ be $k$ {\em prescribed} probability distributions, all dominated by the base measure  $\mu$ ($P_i\ll \mu$), 
with respective densities $p_0=\frac{\mathrm{d}P_0}{\mathrm{d}\mu},\ldots, p_{k-1}=\frac{\mathrm{d}P_{k-1}}{\mathrm{d}\mu}$ (i.e., $p_i$ is the Radon-Nikodym derivative of $P_i$ with respect to $\mu$).

We define $w$-mixtures by their densities as a weighted average of component densities as follows:

\begin{definition}[Statistical $w$-mixtures]
The density $m(x;w)\in M_+^1(\Omega)$ of a {\em $w$-mixture} is defined by: 
$$
m(x;w) \eqdef \sum_{i=0}^{k-1} w_i p_i(x),
$$
with $w\eqdef (w_0,\ldots, w_{k-1})\in \Delta_{k-1}^\circ$, where $\Delta_{k-1}^\circ=\{ w\in\bbR^{k}\ :\ \sum_{i=0}^{k-1} w_i=1 \}$ is the $(k-1)$-dimensional open probability simplex sitting in $\mathbb{R}^k$   with $\sum_{i=0}^{k-1} w_i=1$
(called $(k-1)$-simplex or standard simplex).
\end{definition}
 
That is, $w$-mixtures are {\em strictly} convex weighted combinations of $k$ {\em fixed} component distributions:
They form {\em special subfamilies} of finite statistical mixtures~\cite{statmixture-2004} that are
closed by convex combinations: 
Indeed, one can check that the mixture of $w$-mixtures is a $w$-mixture: 
$$
(1-\alpha)m(x;w)+\alpha m(x;w') = m\left(x;(1-\alpha)w+\alpha w'\right),\quad \forall\alpha\in [0,1].
$$

To motivate the use of $w$-mixtures in applications, let us report some prior works which built $w$-mixtures:

\begin{itemize}
\item Given $n$ datasets $\calO_1,\ldots,\calO_n$, a set of $n$ $w$-mixtures  $m_1=(x;w_1),\ldots, m_n=m(x,w_n)$  (called comixs in~\cite{comix-2016})
can be {\em jointly learned}   by generalizing the
 {\em Expectation Maximization} algorithm~\cite{statmixture-2004} (EM) or the {\em Classification EM} (CEM) algorithm.
In particular, one can learn $w$-Gausian Mixture Models~\cite{comix-2016} ($w$-GMMs for short) where the prescribed mixture components are {\em fixed} Gaussian distributions.

\item Another way to obtain $w$-mixtures is to consider {\em Kernel Density Estimators} (KDEs) at {\em prescribed} locations~\cite{schwander2013,Kernel-1994} for several datasets $\calO_1,\ldots,\calO_n$.
\end{itemize}

Figure~\ref{fig:wgmm} displays two $w$-GMMs with three ($k=3$) fixed components.

\begin{figure}%
\centering
\includegraphics[width=0.8\columnwidth]{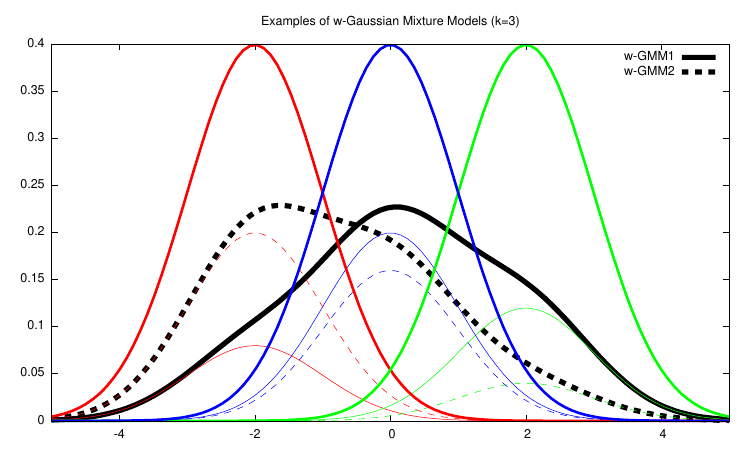}%

\caption{Two $w$-Gaussian Mixture Models ($w$-GMMs) $m(x;w)$ (plain) and $m(x;w')$ (dashed) with fixed components $p_0(x)\sim \calN(-2,1)$ (red), $p_1(x)\sim \calN(2,1)$ (green), and 
$p_2(x)\sim \calN(0,1)$ (blue), and weight vectors $w=(0.2,0.3,0.5)$ and $w'=(0.5,0.1,0.4)$.
 }%
\label{fig:wgmm}%
\end{figure}
 
In this paper, we study the information-geometric structure of the manifold $\calM$ of $w$-mixtures~\cite{Vos-1991,IGDiv-2010}:
$$
\calM\eqdef\{m(x;w)\ ,\ w\in\Delta_{k-1}^\circ\},
$$
and state properties related to several information-theoretic divergences (e.g., total variation metric distance, Kullback-Leibler divergence, Jensen-Shannon divergence, etc.).

The paper is organized as follows:

First, we  concisely describe in \S\ref{sec:igfdiv} the generic construction of the information geometry induced by an arbitrary smooth (parameter) divergence~\cite{IGDiv-2010,EIG-2018}, and recall 
the basics of the class of statistical invariant $f$-divergences~\cite{Vos-1991,IGDiv-2010,IG-2016}.
Next, we describe in~\S\ref{sec:igdf} 
 the dually flat geometry of the space of $w$-mixtures induced either by the Kullback-Leibler (KL) divergence or equivalently by its Bregman  generator set to  the Shannon negentropy (called Shannon information).
This information-geometric construction implies that the KL divergence between any two $w$-mixtures is mathematically equivalent to a Bregman divergence induced by the negative Shannon entropy generator which is however often not available in closed-form\footnote{The Shannon negentropy is available in closed-form for the family of categorical distributions (mixture of Dirac distributions), and more generally, when the component distributions have pairwise disjoint supports.} for $w$-mixtures. 
Yet this structural observation allows us to prove in~\S\ref{sec:igdf} that the {\em KL-averaging aggregation} of $w$-mixtures  can be performed optimally without information loss which is useful for distributed inference of $w$-mixtures~\cite{DistributedEstimation-EF-2014}. 
In~\S\ref{sec:bdjd}, we extend the KL-BD equivalence and show that the skew Jensen-Shannon divergences~\cite{JS-2019,JS-2020} of $w$-mixtures amount 
to skew Jensen $\alpha$-divergences~\cite{BR-2011} on their corresponding weight parameters.
Finally, we consider several divergence inequalities~in \S\ref{sec:closure} between $w$-mixtures.
In the appendix, we recall in \S\ref{sec:commondist} the principal divergences.
Then we show how to stochastically estimate by the Monte Carlo method $f$-divergences which are guaranteed to be non-negative in~\S\ref{sec:extendfdiv} (highlighting a connection between $f$-divergences and Bregman divergences).
Finally, in Appendix~\S\ref{sec:BDStatDiv}, we describe a method to build statistical distances from Bregman divergences induced by integral-based Bregman generators.

\section{Information geometry and $f$-divergences\label{sec:igfdiv}}

\subsection{Information geometry induced by a divergence}

A {\em divergence} $D(p:q)$ is a smooth measure of dissimilarity that satisfies $D(p:q)\geq 0$ with equality iff. $p=q$ (reflexivity property).
The ``:'' notation emphasizes that the divergence may potentially be {\em asymmetric}: $D(p:q)\not= D(q:p)$.
Divergences are also called {\em contrast functions}~\cite{Eguchi-1992}.
(Divergences in information geometry should not be confused with the divergence operator on vector fields.)

Because of the potential asymmetry of a divergence, we define a {\em dual divergence} $D^*(p:q)\eqdef D(q:p)$ via {\em reference duality}~\cite{refduality-2015}, and a {\em symmetrized divergence} $S(p;q) \eqdef \frac{1}{2}(D(p:q)+D^*(p:q))=S(q;p)$ (that may not satisfy the triangle inequality of metric distances).
We emphasize that symmetrized divergences are symmetric divergences using the ``;'' notation instead of the ``:'' notation to separate their arguments.

In information geometry~\cite{Calin-2014,IG-2016,EIG-2018}, we equip a manifold $\calM$ with a {\em metric tensor} and a {\em pair of dual torsion-free affine connections}. 
The  structure $(\calM,D)$ can be induced by any smooth $C^\infty$ divergence $D(\cdot:\cdot)$~\cite{Eguchi-1992,IGDiv-2010,Calin-2014,IG-2016} as follows: 

\begin{enumerate}
\item  The {\em metric tensor}
$g(p)$
	  provides an {\em inner product} between vectors at each tangent plane $T_p$: $\inner{v}{v'}_p= \sum_{i,j} g_{ij} v_i v_j'$.
	These local inner products vary {\em smoothly} on $\calM$, and are used for measuring 
	\begin{itemize}
	\item {\em angles} $\arccos \inner{v}{v'}_p$ between vectors $v$ and $v'$ (with $v\perp v'$ iff $\inner{v}{v'}_p=0$), and
	\item vector {\em lengths} $\|v\|_p:=\sqrt{\inner{v}{v}}_p$ on any tangent plane $T_p$ for $p\in\calM$.
	\end{itemize}

\item  A {\em pair of dual torsion-free affine connections} $\nabla$ and $\nabla^*$ for defining how  vectors are ``parallel'' transported  from any source tangent plane $T_p$ to any  target tangent plane $T_{q}$.
	In differential geometry, an affine {\em connection} $\nabla$ allows one to define {\em $\nabla$-geodesics} as auto-parallel curves $\gamma$: $\nabla_{\dot \gamma} \dot \gamma=0$.
	The symbol $\nabla$ is also used to define a differential operator, called the {\em covariant derivative} acting on vector fields (and more generally any type of tensors).
	An affine connection $\nabla$ is defined by its Christoffel symbols $\Gamma_{ijk}$ which are smooth functions expressed in a local coordinate chart.
	\end{enumerate}
	
	Given a divergence $D(\cdot:\cdot)$, we can induce~\cite{Eguchi-1992,IGDiv-2010} the metric tensor by 
	\begin{equation}
	g_{ij}(p)= \left. \frac{\partial^2}{\partial x_i\partial x_j} D(x:y)\right\vert_{y=x},
	\end{equation}
	and the corresponding affine connection $\nabla$ with Christoffel symbols 
	\begin{equation}
	\Gamma_{ijk}(x)= \left. -\frac{\partial^3}{\partial x_i\partial x_j\partial y_k} D(x:y)\right\vert_{y=x}.
	\end{equation}
	
	The dual connection $\nabla^*$ is induced by the dual divergence $D^*(\cdot:\cdot)$ (with dual metric tensor $g^*=g$),
	and
the pair of connections $(\nabla,\nabla^*)$ is said dually coupled to the metric tensor $g$ since for any triple of vector fields $X, Y$ and $Z$, we have~\cite{Calin-2014,IG-2016,EIG-2018} the following property:
\begin{equation}
	\nabla_X\inner{Y}{Z} = \inner{\nabla_X Y}{Z} + \inner{Y}{\nabla^*_X Z}.
\end{equation}
	
The pair $(\nabla,\nabla^*)$ is said {\em conjugate} because $\calM$ being flat wrt. to $\nabla$ implies $\calM$ being flat wrt. to $\nabla^*$, and vice-versa~\cite{Eguchi-1992,IGDiv-2010,EIG-2018}.
Furthermore, the mean connection $\bar{\nabla}=\frac{\nabla+\nabla^*}{2}$ corresponds to the Riemannian torsion-free Levi-Civita metric connection  $\nabla^g$ which is induced by the metric tensor $g$.
One can define a third-order totally symmetric tensor $T$~\cite{Eguchi-1992,IGDiv-2010,EIG-2018} (called the skewness tensor) for vector fields $X, Y$ and $Z$ by:
$$
T(X,Y,Z) \eqdef g(\nabla_X Y-\nabla^*_X Y,Z).
$$ 
The structure $(\calM,g,T)$ is called a {\em statistical manifold}~\cite{lauritzen1987statistical}, and characterizes the dualistic structure of information geometry~\cite{Calin-2014,IG-2016,EIG-2018}.

\begin{table}
$$
\begin{array}{lll}
\text{Name} & \text{$f$-divergence $I_f(p:q)$} & \text {Generator $f(u)$}\\
\hline\hline
\text{Total variation (metric)} & \frac{1}{2}\int |p(x)-q(x)| \dnu(x) & \frac{1}{2} |u-1| \\
\text{Squared Hellinger} & \int (\sqrt{p(x)}-\sqrt{q(x)})^2 \dnu(x) & (\sqrt{u}-1)^2\\
\text{Pearson $\chi^2_P$}  &  \int \frac{(q(x)-p(x))^2}{p(x)} \dnu(x) & (u-1)^2\\
\text{Neyman $\chi^2_N$}  &  \int \frac{(p(x)-q(x))^2}{q(x)} \dnu(x) & \frac{(1-u)^2}{u}\\
\text{Kullback-Leibler} & \int p(x)\log \frac{p(x)}{q(x)} \dnu(x) & -\log u\\
\text{reverse Kullback-Leibler} & \int q(x)\log \frac{q(x)}{p(x)} \dnu(x) & u\log u \\
\text{Squared triangular} & \int \frac{(p(x)-q(x))^2}{p(x)+q(x)} \dnu(x) & \frac{(u-1)^2}{2(1+u)}\\
\text{Squared perimeter} & \int \sqrt{p^2(x)+q^2(x)}  \dnu(x) -\sqrt{2} & \sqrt{1+u^2}-\frac{1+u}{\sqrt{2}}\\
\text{$\alpha$-divergence} &  \frac{4}{1-\alpha^2} (1-\int p^{\frac{1-\alpha}{2}}(x) q^{1+\alpha}(x) \dnu(x))  & \frac{4}{1-\alpha^2}(1-u^{\frac{1+\alpha}{2}})\\ 
\text{Jensen-Shannon} & \frac{1}{2}\int (p(x)\log \frac{2p(x)}{p(x)+q(x)} +  q(x)\log \frac{2q(x)}{p(x)+q(x)})\dnu(x) &  -(u+1)\log \frac{1+u}{2} + u\log u\\ \hline
\end{array}
$$
 
\caption{Common $f$-divergences $I_f(p:q)$ with their corresponding convex generators $f(u)$.\label{tab:fdiv}}
\end{table}
  
\subsection{$f$-Divergences: Definition, properties, and stochastic estimation}
The class of statistical {\em $f$-divergences}~\cite{Morimoto-1963,Csiszar-1963,AliSilvey-1966} between two distributions $p,q\ll\mu$ defined on support $\calX$ is defined by:
\begin{equation}
I_f(p:q) \eqdef \int_\calX p(x)f\left(\frac{q(x)}{p(x)}\right) \dmu(x) \geq f(1),
\end{equation} 
where $f(u)$  is a convex function satisfying $f(1)=0$ and strictly convex at $1$. 
We use the following conventions:
$$
0f\left(\frac{0}{0}\right)=0,\quad f(0)=\lim_{u\rightarrow 0^+} f(u),\quad \forall a>0, 0f\left(\frac{a}{0}\right)=\lim_{u\rightarrow 0^+} uf\left(\frac{a}{u}\right)=a\lim_{u\rightarrow \infty} \frac{f(u)}{u}.
$$

Two generators $f(u)$ and $g(u)$ induce the same $f$-divergence, $I_f(p:q)=I_g(p:q)$, iff there exists $\lambda\in\bbR$ such that $g(u)=f(u)+\lambda(u-1)$. Thus we may enforce wlog. that $f'(1)=0$. We may further fix the scale of the $f$-divergence by setting $f''(1)=1$.

For discrete distributions with Probability Mass Functions (PMFs)
 $p=(p_0,\ldots, p_{d-1})$ and $q=(q_0,\ldots, q_{d-1})$, it comes that $I_f(p:q)=\sum_{i=0}^{d-1} p_i f(\frac{q_i}{p_i})$ (base measure $\mu$ is the counting measure).
Common $f$-divergences~\cite{fdivapprox-2014} include the Kullback-Leibler (KL) divergence ($f(u)=-\log u$), the $\chi^2$-divergence, the Hellinger divergence, the $\alpha$-divergences,  the total variation $\TV(p,q)\eqdef\frac{1}{2}\int_\calX |p(x)-q(x)|\dmu(x)$ ($f(u)=\frac{1}{2}|1-u|$, etc. See Table~\ref{tab:fdiv} for a summary of the main $f$-divergences.
The only $f$-divergence which is a metric distance~\cite{TVuniquefmetric-2007} satisfying the triangle inequality is the total variation distance.

The dual $f$-divergence $I_f^*(p:q)\eqdef I_f(q:p)$ is obtained by taking the {\em dual generator} $f^\diamond(u) \eqdef uf\left(\frac{1}{u}\right)$: $I_{f^\diamond}(p:q)=I_f(q:p)=I_f^*(p:q)$. Thus $f$-divergences can always be symmetrized by taking the generator $s(u)=\frac{1}{2}(f(u)+f^\diamond(u))$.
Examples of symmetric $f$-divergences are the Jeffreys divergence~\cite{IG-2016} $J(p;q) \eqdef \KL(p:q) +  \KL(q:p)$  and the Jensen-Shannon divergence~\cite{js-1991,Nielsen-2010} $\JS(p:q)\eqdef \frac{1}{2} (K(p:q) + K(q:p))$ with $K(p:q) \eqdef \KL(p:\frac{p+q}{2})=\int p(x)\log\frac{2p(x)}{p(x)+q(x)}\dmu(x)$.
The $f$-divergences are upper bounded by~\cite{Vajda-1987}: 
$$
I_f(p:q)\leq \lim_{\eps\rightarrow 0} f(\eps)+ f^\diamond(\eps).
$$
Thus when $f(0)+f^\diamond(0)<\infty$, the symmetrized $f$-divergences are bounded (e.g., the Jensen-Shannon divergence~\cite{JS-2019,JS-2020} re always bounded by $\log 2$ or the total variation distance is bounded by $1$).
Depending on the generator $f$, the $f$-divergence may be unbounded or even be infinite when the integral diverges: $I_f(p:q) \eqdef +\infty$ (e.g.,
 the KL divergence between a standard Cauchy distribution and a standard normal distributions).
The $f$-divergences can be extended~\cite{DivEstimator-2006} to positive measure $p$ and measure $q$ (potentially negative) by taking the
extended  generator 
$\bar{f}(u)=f(u)-f'(1)(u-1)$ 
for $f$ a continuously differentiable function at $u=1$ (thus $\bar{f}(1)=\bar{f}'(1)=0$).

The $f$-divergences between statistical mixtures~\cite{KLGMM-LSE-2016,nielsen-2012} is not available in closed form although it can be easily upper bounded by using the {\em joint convexity property} of $f$-divergences~\cite{Vajda-1987}: 
$$
I_f(m:m')\leq \sum_{i,j} w_iw_j' I_f(p_i:p_j')
$$ 
for two mixture models $m(x)=\sum_i w_ip_i(x)$ and $m'(x)=\sum_j w_j'p_j'(x)$.

In practice, to bypass the intractability of $f$-divergences, one {\em estimates} the $f$-divergence using Monte Carlo (MC) stochastic integration (see~\cite{DL-2016}, Chapter 17):
Let $s$ iid. samples $x_1,\ldots, x_s \sim p(x)$, and define the estimator 
$$
\hat I_f^s(p:q) \eqdef \frac{1}{s} \sum_{i=1}^s f\left( \frac{q(x_i)}{p(x_i)} \right).
$$

It follows from the Law of Large Numbers (LLN) that  $\lim_{s\rightarrow} \hat I_f^s(p:q)=I_f(p:q)$ provided that  the variance $V_p\left[f\left( \frac{q(x)}{p(x)} \right)\right]$ is bounded. The MC estimator is {\em consistent} when $I_f(p:q)<\infty$ but  but the MC approximation does not hold when $I_f$ diverges.
Furthermore, using the Central Limit Theorem (CLT), the MC estimator is shown to be {\em normally} distributed: 
$$
\hat I_f^s(p:q)\sim \mathcal{N}\left(I_f(p:q),\frac{1}{s}V_p\left[f\left( \frac{q(x)}{p(x)} \right)\right]\right).
$$

In practice, the variance is approximated by the sample variance, and Confidence Intervals (CIs) can be reported.
 
Note that the MC estimator $\hat I_f^s(p:q)$ may not guarantee that $\hat I_f^s(p:q)\geq 0$:
It can potentially yield a negative number. This is because $f\left(\frac{q(x)}{p(x)}\right)$ may be negative.

For the KL MC estimator, we use the following MC estimation:
$$
\widehat{\KL}^s(p:q) \eqdef \frac{1}{s} \sum_{i=1}^s \left(  \log \frac{p(x_i)}{q(x_i)}  + \frac{q(x_i)}{p(x_i)} -  1 \right),
$$
which is the MC estimator for the {\em extended KL divergence}~\cite{Bregman-2005} (extended to arbitrary positive measures).
In particular, the extended KL divergence amounts to the KL divergence for normalized densities.
This (extended) KL estimator is {\em guaranteed} to be non-negative since we can rewrite it as:
$$
\widehat{\KL}^s(p:q) \eqdef \frac{1}{s} \sum_{i=1}^s \IS(q(x_i) : p(x_i)),
$$
where
$$
\IS(p:q)=\frac{p}{q}+\log\frac{q}{p}-1\geq 0,
$$
is the univariate {\em Itakura-Saito divergence}~\cite{Bregman-2005} (hence nonnegative).

In general, MC estimation of $f$-divergences may violate the reflexivity property of divergences (i.e., $D(p:q)=0 \Leftrightarrow p=q$).
Indeed, consider the generator $f_\lambda(u)=f(u)+\lambda(u-1)$ for $\lambda\in\bbR$.
We have $I_f(p:q)=I_{f_\lambda}(p:q)$.
The MC estimation of $f_\lambda$-divergence yields

\begin{eqnarray*}
\hat I_{f_\lambda}^s(p:q) &\eqdef & \frac{1}{s} \sum_{i=1}^s f_\lambda\left( \frac{q(x_i)}{p(x_i)} \right),\\
&=&  \frac{1}{s} \sum_{i=1}^s \left( f\left( \frac{q(x_i)}{p(x_i)}  \right) + \lambda \left(\frac{q(x_i)}{p(x_i)}-1\right) \right).
\end{eqnarray*}

Choosing $\lambda_0=\frac{ \sum_{i=1}^s  f\left( \frac{q(x_i)}{p(x_i)}  \right) }{ \sum_{i=1}^s  \left(\frac{q(x_i)}{p(x_i)}-1\right)}$ yields
$\hat I_{f_{\lambda_0}}^s(p:q)=0$ although $I_{f_{\lambda_0}}(p:q)>0$ for $q\not =p$.
Note that $\widehat{\KL}^s(p:q)=0$ iff $p=q$ when $s$ is greater than the number of degrees of freedom used for describing the distributions. 
Notice that in practice, one has to take care of numerical precision errors when implementing MC stochastic estimator.

Appendix~\ref{sec:extendfdiv} further report details on {\em extended $f$-divergences} with guaranteed non-negative Monte Carlo estimators by highlighting a connection between $f$-divergences and Bregman divergences.
We also refer the reader to~\cite{fdivestimate-2014,fdivkNN-2017} for other techniques for efficiently estimating $f$-divergences.

\subsection{Information monotonicity and invariance}
To get lower bounds on the $f$-divergence $I_f$, we  use the {\em information monotonicity} property of $f$-divergences~\cite{Calin-2014,IG-2016,EIG-2018}. 
Let $\calA=\uplus_{i=1}^h \calA_i$ be a {\em partition} of the support $\calX$ into $h$ pieces.
Let $\tp=(\tp_0, \ldots, \tp_{h-1})$ and $\tq=(\tq_0, \ldots, \tq_{h-1})$ denote the discrete distributions obtained by {\em coarse-graining} $p$ and $q$, with $\tp_i=\int_{\calA_i} p(x)\dmu(x)$ and $\tq_i=\int_{\calA_i} q(x)\dmu(x)$. This coarse-graining process is called lumping in~\cite{Csiszar-2004} and is a particular case of a Markov kernel. 
Coarse-graining can be interpreted as converting distributions into histograms with $h$ bins.
Then the information monotonicity~\cite{IG-2016} of  divergences (related to the data processing inequality~\cite{DPI-1997}, DPI) ensures that 

$$
0\leq D(\tp:\tq) \leq D(p:q).
$$
Clearly, when $h=1$ and there is a unique bin, we have $0= D(1:1) \leq D(p:q)$.
In particular, this lower bound applies when $p=m(x,w)$ and $q=m(x,w')$ are two $w$-mixtures.

A divergence is said {\em separable} iff $D(p:q)=\sum_i D_1(p_i:q_i)$ where $D_1$ is a {\em scalar divergence}.
$f$-Divergences are the only separable divergences (also called decomposable divergences), with $I_f^1(p:q)=pf(q/p)$,  that enjoy information monotonicity~\cite{IG-2016,Liang-2016} (except for the special case of binary alphabets~\cite{Jiao-2014}).
Since $I_f(\tp:\tq) =\sum_{i=0}^{h-1} \tp_i f\left(\frac{\tq_i}{\tp_i}\right)$ is computable in $O(h)$ time, it yields a lower bound on  $I_f(p:q)$ provided that we are    able to compute in closed-form $\tp$ and $\tq$, say, using Cumulative Distribution Functions (CDFs). Usually, CDF formula are available for univariate distributions (say, Gaussian) but this is often not tractable for multivariate distributions although efficient numerical schemes are available~\cite{Genz-2009}.
 Notice that $f$-divergences can be lower bounded using total variation distances by Pinsker-type inequalities~\cite{Pinsker-2009}.
Finally, let us state that $f$-divergences  are invariant~\cite{finvar-2010}  under differentiable and invertible transformations of the sample space (say, $h:\calX\mapsto\calY$):
$I_f(p:q)=I_f(p':q')$ with $p'(x)=p(h(x))=p(x)|J(x)|^{-1}$ and $q'(x)=q(h(x))=q(x)|J(x)|^{-1}$, where $|J(x)|$ denotes the determinant of the Jacobian matrix of $h$.

\section{Geometry induced by the Kullback-Leibler divergence}\label{sec:igdf}

In this section, we recall the fact that the space of $w$-mixtures forms a mixture family in information geometry, and as such, we can model the space of $w$-mixtures
as a dually flat space equipped with a pair of convex potential functions inducing dual Bregman divergences~\cite{EIG-2018,LegendreIG-2010}.
We provide a full description of this fact mentioned  in~\cite{IGHierarchy-2001,DeformedIG-2012,Calin-2014,IG-2016}, and perform sanity checks during the construction. See also~\cite{MCIG-2018,MCIG-2019}.

\subsection{Dually flat manifold of $w$-mixtures}
When the $k$ prescribed component distributions $p_0(x),\ldots, p_{k-1}(x)$ are {\em linearly independent}, 
the space 
$$
\calM=\left\{m(x;w)\ ,\ w\in\Delta_{k-1}^\circ\right\},
$$ 
of $w$-mixtures forms a {\em mixture family} in information geometry~\cite{IG-2016}:
$$
\calM=\left\{ m(x;\eta)= \sum_{i=1}^{k-1} \eta_i p_i(x) + \left(1-\sum_{i=1}^{k-1} \eta_i\right) p_0(x), \eta\in\bbR^{k-1}_{++}  \right\},
$$
with $\eta_i=w_i$ for $i\in [k-1]=\{1,\ldots,k-1\}$ and $w_0=1-\sum_{i=1}^{k-1} \eta_i = 1-\sum_{i=1}^{k-1} w_i$.
(We adopted the ``primal'' parameter to be $\eta$ following the textbook~\cite{IG-2016}; Appendix~\ref{sec:BDStatDiv} explains the dually flat construction with primal parameter $\theta$.)
We have $\sum_{i=1}^{k-1} \eta_i<1$. Denote by $H^\circ\eqdef \{ \eta\in\bbR^{k-1}_{++} \ :\ \sum_{i=1}^{k-1} \eta_i<1\}$. 
Let $D=k-1$ denote the {\em order} of the mixture family, that is its number of degrees of freedom. 
We have $m(x;w)=m(x;\eta)$, where vector $w$ is $k$-dimensional while vector $\eta$ is $(k-1)$-dimensional.
Manifold $\calM$ is an affine subspace of the space of density wrt to $\mu$.

Let $f_i(x)=p_i(x)-p_0(x)$ for $i\in [D]$ (with $\int_\calX f_i(x)\dmu(x)=0$), and $c(x)=p_0(x)$ (with $\int_\calX p_0(x)\dmu(x)=1$).
Then $\calM$ can be written in the {\em canonical form} of a mixture family in  information geometry~\cite{IG-2016}:
$$
\calM=\left\{ m(x;\eta)= \sum_{i=1}^{k-1} \eta_i f_i(x) + c(x),\quad \eta\in H^\circ  \right\},
$$
where the $f_i(x)$'s and $c(x)$ are linearly independent.
By convention, we shall denote by $\eta_0 = 1-\sum_{i=1}^{D} \eta_i$ the weight of $p_0$. 
Beware that $\eta_0$ is {\em not} a vector component of $\eta=(\eta_1,\ldots,\eta_D)\in H^\circ$, the $D=(k-1)$-dimensional open probability simplex sitting in $\mathbb{R}^d$.
We can convert from weight coordinate $w$ to $\eta$-coordinate as follows:
$$
w=\left(
\begin{array}{c}
w_0=1-\sum_{i=1}^{k-1} w_i\\
w_1\\
\vdots\\
w_{k-1}
\end{array}
\right)\in\Delta_{k-1}^\circ\subset\bbR^k
\Leftrightarrow
\eta=\left(
\begin{array}{l}
\eta_1=w_1\\
\vdots\\
\eta_{k-1}=w_{k-1}
\end{array}
\right)\in H^\circ\subset\bbR^{k-1}_{++}\subset\bbR^D.
$$

We consider $\calM$ as a {\em smooth manifold} of $\eta$-mixtures.
The Shannon differential entropy~\cite{Cover-2012,ShannonDiscontinuity-2009} 
\begin{equation}
h(m) \eqdef -\int_\calX m(x)\log m(x)\dmu(x),
\end{equation}
of a mixture $m(x)$ is usually not available in closed-form~\cite{KLGMM-LSE-2016} because of the log-sum term.
Hower, both lower and upper bounds on the entropy of mixtures are given in~\cite{KLGMM-LSE-2016,UBEntropyMixture-2017}:

\begin{eqnarray*}
h(m) &\geq&  \sum_i w_i h(p_i)  - \sum_{i} w_i \log\left( \sum_{j} w_j e^{-\Bhat(p_i:p_j)} \right),\\
h(m) &\leq&  \sum_i w_i h(p_i)  - \sum_{i} w_i \log\left( \sum_{j} w_j e^{-\KL(p_i:p_j)} \right),
 \end{eqnarray*}
where $\Bhat$ denotes the {\em Bhattacharrya divergence}~\cite{Bhat-1946,BR-2011} defined by
\begin{equation*}\label{eq:bhat}
\Bhat(p_i:p_j) \eqdef -\log \int_\calX \sqrt{p_i(x)p_j(x)}\dmu(x).
\end{equation*}

\subsection{Negative entropy as a potential convex function: Bregman generator}
For $\eta$-mixtures, the {\em parametric function} $E(\eta)=F^*(\eta)=-h(m(x;\eta))$ (the notation $F^*$ will be explained shortly thereafter),
is strictly convex and differentiable~\cite{Calin-2014}.
 For example, when $D=1$, we have $(F^*(\eta))''=\int_\calX  \frac{(p_1(x)-p_0(x))^2}{m(x;\eta)} \dmu(x)>0$, and therefore $E$ is strictly convex and differentiable.
Figure~\ref{fig:graph} displays the graph of the negative entropy $F^*(\eta)$ for two $\eta$-mixtures of order $D=1$.

\begin{figure}%
\centering
\input{Fig/Gconvex12.latex}%
\caption{Graph plots of the negative entropy $F^*(\eta)=-h(m(x;\eta))$ for two $\eta$-mixtures of order $D=1$ (with univariate Gaussian components).
Here, the function $F^*(\eta)$ is estimated using Monte-Carlo integration with $10^6$ samples.}%
\label{fig:graph}%
\end{figure}

In general, the Hessian $\nabla^2 F^*(\eta)$ matrix has coefficients
\begin{eqnarray}
\nabla^2 F^*(\eta)_{i,j} &=& \int_\calX  \frac{(p_i(x)-p_0(x))  (p_j(x)-p_0(x))}{m(x;\eta)} \dmu(x),\\
&=& E_{x\sim m(x;\eta)}\left[\frac{(p_i(x)-p_0(x))  (p_j(x)-p_0(x))}{m^2(x;\eta)}\right].
\end{eqnarray}
It is a positive definite matrix: $\nabla^2 F^*(\eta)\succ 0$.
Thus we can  form a {\em dually flat manifold}~\cite{Calin-2014,IG-2016} 
where the Kullback-Leibler divergence between two mixtures $m(x;\eta_1)$
and $m(x;\eta_2)$ amounts to calculate a {\em Bregman divergence}~\cite{Bregman-2005} $B_{F^*}(\eta_1:\eta_2)$ for the {\em negative Shannon information generator}~\cite{IG-2016}:
\begin{eqnarray}\label{eq:FShannon}
F^*(\eta)&=& -h(m(x;\eta))\\
&=& \int_\calX m(x;\eta)\log m(x;\eta)  \dmu(x).
\end{eqnarray}
Since Shannon entropy is strictly concave, the negative Shannon entropy called {\em Shannon information} is strictly convex (and a dually flat manifold can be built from any convex function). Let $m_1(x)\eqdef m(x;\eta_1)$ and $m_2(x) \eqdef m(x;\eta_2)$.
We have

\begin{eqnarray}
\KL(m_1:m_2)&\eqdef&  \int_\calX m(x;\eta_1) \log\frac{m(x;\eta_1)}{m(x;\eta_2)} \dmu(x),\\
&=& F^*(\eta_1) - F^*(\eta_2) - \inner{\eta_1-\eta_2}{\nabla F^*(\eta_2)},\label{eq:BDFstar}\\
&=& B_{F^*}(\eta_1:\eta_2),
\end{eqnarray}
where $\inner{x}{y}=x^\top y$ denotes the scalar product of $\mathbb{R}^D$.
Although the Shannon information of a $w$-mixture is a convex function of $\eta$, it is  usually not available in closed-form~\cite{KLnotanalytic-2004,HGMM-2016}. A very particular case is the mixture family of two Cauchy distributions for which the Shannon entropy is available in closed-form~\cite{nielsen2021dually}.
The $\eta$ parameter is traditionally called the {\em ``expectation'' parameter} in information geometry (although this stems from a property of the exponential family manifolds~\cite{IG-2016}). 
We shall write for short $h(\eta)=h(m(x;\eta))$.

Since 
\begin{equation}
F^*(\eta_1)= \int_\calX m(x;\eta_1) \log m(x;\eta_1) \dmu(x)=-h(m(x;\eta_1)),
\end{equation}
 it follows from Eq.~\ref{eq:BDFstar}
that 
\begin{equation}
\int_\calX m(x;\eta_1) \log\frac{1}{m(x;\eta_2)} \dmu(x)=-\int_\calX m(x;\eta_1) \log {m(x;\eta_2)} \dmu(x)
\end{equation}
 is the cross-entropy~\cite{crossEF-2010} $h^\times(m(x;\eta_1):m(x;\eta_2))$ (with $h^\times(\eta:\eta)=h(m(x;\eta))$) and we have

\begin{eqnarray}
h^\times(m(x;\eta_1):m(x;\eta_2)) &=& -\int_\calX m(x;\eta_1) \log {m(x;\eta_2)} \dmu(x),\\
&=&   - F^*(\eta_2) - \inner{\eta_1-\eta_2}{\nabla F^*(\eta_2)}.
\end{eqnarray}

\begin{figure}%
\centering

\input{Fig/Fconvex12.latex}%

\caption{Graph plots of the cross-entropy $F(\theta)=h^\times(p_0(x):m(x;\eta))$ for two $\eta$-mixtures of order $1$ (with univariate Gaussian components).
The function $F^(\theta)$ is estimated using Monte-Carlo integration with $10^6$ samples.}%
\label{fig:graphF}%
\end{figure}

\subsection{Cross-entropy as the dual potential function: Dual Bregman generator}

The dual parameters $\theta=(\theta^1,\ldots,\theta^D)$, called the {\em natural parameters}, are defined by

\begin{equation}\label{eq:theta}
\theta^i(\eta) = (\nabla_\eta F^*(\eta))_i =\int_\calX \left(p_i(x)-p_0(x)\right)\log m(x;\eta)\dmu(x), 
\end{equation}
since $(\nabla_\eta m(x;\eta))_i=p_i(x)-p_0(x)$ and swapping $\nabla\int=\int\nabla$ (under regularity condition of Leibniz integral rule).
Figure~\ref{fig:graphF} displays the graph plot of $F(\theta)$ for uni-order $w$-Gaussian mixture models.
We can rewrite the natural parameter coordinates as

$$
\theta^i(\eta) = h^\times(p_0(x):m(x;\eta)) -  h^\times(p_i(x):m(x;\eta)).
$$

The dual Legendre convex conjugate~\cite{LegendreIG-2010} $F(\theta)$ of $F^*(\eta)$  defined by the Legendre-Fenchel transform 
$$
F(\theta)=\max_\theta \{\inner{\theta}{\eta} -F^*(\eta)\}
$$ 
is
\begin{eqnarray}
F(\theta) &=& -\int_\calX p_0(x)\log m(x;\eta)\dmu(x),
\end{eqnarray}
This conjugate function can be interpreted  the {\em cross-entropy} $h^\times(p_0(x):m(x;\eta))$ between $p_0(x)$ and $m(x;\eta)$:
$$
h^\times(p_0(x):m(x;\eta)) \eqdef -\int_\calX p_0(x)\log m(x;\eta)\dmu(x). 
$$
The conjugate functions $F$ and $F^*$ are called the {\em potential functions} of dually flat space in information geometry.

A sanity check shows that $(F(\theta),F^*(\eta))$ is indeed a pair of  {\em convex conjugates} by verifying Young's inequality~\cite{Rockafellar-1970}
\begin{equation}
F(\theta)+F^*(\eta)=\inner{\theta}{\eta}.
\end{equation}

\begin{proof}
We have
\begin{eqnarray}
\inner{\theta}{\eta} &=& \sum_{i=1}^D \eta_i  \underbrace{\int_\calX (p_i(x)-p_0(x))\log m(x;\eta)\dmu(x)}_{\theta_i}, \label{eq:part1} \\ 
F(\theta)+F^*(\eta) &=&    \int (m(x;\eta)(x)-p_0(x)) \log m(x;\eta)\dmu(x).  \label{eq:part2}
\end{eqnarray}
Since $m(x;\eta)=\sum_{i=1}^D \eta_i p_i(x) + \eta_0 p_0(x)$, we have
$m(x;\eta)(x)-p_0(x)= \sum_{i=1}^D \eta_i  p_i(x)+(\eta_0-1) p_0(x)$, and $\eta_0-1=-\sum_{i=1}^D \eta_i$.
Thus Eq.~\ref{eq:part1} matches Eq.~\ref{eq:part2}.

Another proof consists in rewriting the differential entropy of the mixture as follows
\begin{eqnarray}
h(\eta) &=& \sum_{i=0}^{k-1} w_i \int p_i(x)\log\frac{1}{m(x;\eta)} \dmu(x),\\
&=& \sum_{i=1}^{k-1} w_i \underbrace{\int_\calX (p_i(x)-p_0(x))\log\frac{1}{m(x;\eta)}\dmu(x)}_{\theta_i} + \int p_0(x)\log\frac{1}{m(x;\eta)}\dmu(x),\\
&=& - \sum_{i=1}^{k-1} w_i\theta_i + h^\times(p_0(x):m(x;\eta)).
\end{eqnarray}
Since $w_i=\eta_i$ for $i\in [k-1]$, $F(\theta)=h^\times(p_0(x):m(x;\eta))$, and $F^*(\eta)=-h(m(x;\eta))$, we get Young' s inequality:
$$
F(\theta)+F^*(\eta)-\inner{\theta}{\eta}=0.
$$
\end{proof}

Function $F(\theta)$ is convex with respect to $\theta$, and the gradients of the convex conjugates are reciprocal, allowing one to convert (theoretically) from one coordinate system into the dual one: 
$$
\eta=\nabla F(\theta)\quad \mbox{and}\quad \theta=\nabla F^*(\eta).
$$

Let $\partial_i\eqdef \frac{\partial}{\partial\theta^i}$ and $\partial^i\eqdef \frac{\partial}{\partial\eta_i}$.
(Those subscript and superscript derivative notations emphasize the contravariant and covariant natures of  the derivations~\cite{Calin-2014,IG-2016}.)
At any tangent plane $T_p$ of $\calM$, the dual vector bases $\{\partial_i\}_i$ and $\{\partial^i\}_i$ satisfy
$\inner{\partial_i}{\partial^j}_p=\delta_i^j$ with $\delta_i^j=1$ iff $i=j$ and $0$ otherwise.
That is, the natural and moment coordinate systems are biorthogonal.

However, since neither $F$ or $F^*$ are usually available in closed forms in practice (except for the multinomial family that are $w$-mixtures with prescribed Dirac component distributions), those conversions are often computationally intractable.
See also the     log-linear models~\cite{Miura-2011} describing a pair of stochastic binary variables. 

\subsection{Kullback-Leibler of $w$-mixtures derived from the Bregman divergences or Legendre-Fenchel divergences}
Overall, it follows from the dually flat geometry of $\eta$-mixtures that the KL divergence between two $\eta$-mixture distributions
 of $\calM$ can be equivalently written as

\begin{eqnarray}
\KL(m_1:m_2)&=&  \int m(x;\eta_1) \log\frac{m(x;\eta_1)}{m(x;\eta_2)} \dmu(x),\nonumber\\
&=& B_{F^*}(\eta_1:\eta_2) = B_F(\theta_2:\theta_1),\\
&=& D_{F^*,F}(\eta_1:\theta_2) = D_{F,F^*}(\theta_2:\eta_1),
\end{eqnarray}
where $D_{F^*,F}(\eta_1:\theta_2)= F^*(\eta_1) + F(\theta_2)  -\inner{\eta_1}{\theta_2}$
denotes the {\em canonical divergence}~\cite{IG-2016} in dually flat spaces written using the mixed $\theta/\eta$-coordinate systems.

Let us check that $D_{F^*,F}(\eta:\theta')=\KL(m(x;\eta):m(x;\eta'))$.

\begin{proof} 
We have
$$
D_{F^*,F}(\eta:\theta')=\int (m(x;\eta)\log m(x;\eta)-p_0(x)\log m'(x;\eta')
-\sum_{i=1}^D \eta_i(p_i(x)-p_0(x))\log m(x;\eta') )\dmu(x)
$$

Since
 $$
\sum_{i=1}^D \eta_i p_i(x)\log m'(x;\eta')=(m(x;\eta)-\eta_0p_0(x))\log m(x;\eta'),
$$ 
we get:

\begin{eqnarray}
\lefteqn{D_{F^*,F}(\eta:\theta')=\int_\calX  \left(m(x;\eta)\log \frac{m(x;\eta)}{m(x;\eta')}-p_0(x)\log m(x;\eta)\right. }\nonumber\\
&& \left.+\eta_0p_0(x)\log m(x;\eta')+(1-\eta_0)p_0(x)\log m(x;\eta')\right)\dmu(x).
\end{eqnarray}

The proof highlights that the formula holds even for {\em unnormalized} mixture models~\cite{Amari-2009}, by writing the {\em extended KL divergence}~\cite{Bregman-2005}

$$
\KL(m(x;\eta):m(x;\eta')) = \int  \left(m(x;\eta)\log \frac{m(x;\eta)}{m(x;\eta')}
 + m(x;\eta') -m(x;\eta') \right)\dmu(x).
$$
When $\int m(x;\eta') \dmu(x) = \int m(x;\eta')\dmu(x) =1$, we recover the traditional KL divergence defined on probability measures.
\end{proof}

\begin{theorem}[KL of $w$-mixtures is a Bregman divergence]\label{lemma:KLetaBD}
The Kullback-Leibler divergence between two $\eta$-mixtures (or $w$-mixtures) is equivalent to a Bregman divergence defined for the convex Shannon information generator (negative entropy) on the $\eta$-parameters.
\end{theorem}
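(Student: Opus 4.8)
The plan is to collect the facts already established in \S\ref{sec:igdf} and verify by a short direct computation that the Bregman divergence $B_{F^*}(\eta:\eta')$ generated by the negative Shannon entropy
\[
F^*(\eta) = -h(m(x;\eta)) = \int_\calX m(x;\eta)\log m(x;\eta)\dmu(x)
\]
coincides with $\KL(m(x;\eta):m(x;\eta'))$. Two preliminary facts are needed, both already recorded above: $F^*$ is strictly convex and differentiable on $H^\circ$ (its Hessian $\nabla^2 F^*(\eta)$, with $(i,j)$ entry $\int_\calX \frac{(p_i(x)-p_0(x))(p_j(x)-p_0(x))}{m(x;\eta)}\dmu(x)$, is positive definite once the $p_i$ are linearly independent); and its gradient is the natural parameter, $(\nabla_\eta F^*(\eta))_i = \theta^i(\eta) = \int_\calX (p_i(x)-p_0(x))\log m(x;\eta)\dmu(x)$, obtained from $(\nabla_\eta m(x;\eta))_i = p_i(x)-p_0(x)$ via the Leibniz rule.

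Granting these, I would expand
\[
B_{F^*}(\eta:\eta') = F^*(\eta) - F^*(\eta') - \inner{\eta-\eta'}{\nabla F^*(\eta')}
\]
and simplify the last term. Writing the mixture in canonical form $m(x;\zeta) = \sum_{i=1}^D \zeta_i\,(p_i(x)-p_0(x)) + p_0(x)$ and subtracting gives the key identity
\[
m(x;\eta) - m(x;\eta') = \sum_{i=1}^D (\eta_i-\eta'_i)\,(p_i(x)-p_0(x)),
\]
hence $\inner{\eta-\eta'}{\nabla F^*(\eta')} = \sum_{i=1}^D (\eta_i-\eta'_i)\,\theta^i(\eta') = \int_\calX \bigl(m(x;\eta)-m(x;\eta')\bigr)\log m(x;\eta')\dmu(x)$. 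Substituting into the display above, the two copies of $\int_\calX m(x;\eta')\log m(x;\eta')\dmu(x)$ cancel, leaving $\int_\calX m(x;\eta)\log\frac{m(x;\eta)}{m(x;\eta')}\dmu(x) = \KL(m(x;\eta):m(x;\eta'))$. Since $w\mapsto\eta$ is an affine bijection from $\Delta_{k-1}^\circ$ onto $H^\circ$ (with $\eta_i=w_i$ for $i\in[k-1]$), the claim for $w$-mixtures follows at once.

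I do not expect a serious obstacle: this is the same manipulation already sketched around Eq.~\ref{eq:BDFstar} and Eq.~\ref{eq:part2}. The only points requiring care are the regularity conditions justifying the interchange $\nabla\int = \int\nabla$ used to compute $\nabla F^*$, which I would handle by invoking the dominated-convergence form of Leibniz's rule under mild integrability hypotheses on the components $p_i$; and the observation that strict convexity of $F^*$ --- equivalently, the reflexivity property $B_{F^*}(\eta:\eta')=0 \iff \eta=\eta'$ of the induced divergence --- genuinely relies on the linear independence of the prescribed components, without which $\calM$ is not faithfully parametrized and $F^*$ is merely convex. Finally, as the surrounding text notes, the identical algebra goes through for unnormalized mixtures, in which case one obtains the extended KL divergence of~\cite{Bregman-2005}.
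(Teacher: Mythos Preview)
Your proposal is correct and follows essentially the same route as the paper: both exploit the affine dependence of $m(x;\eta)$ on $\eta$ to rewrite the inner-product term $\inner{\eta-\eta'}{\nabla F^*(\eta')}$ as $\int_\calX (m(x;\eta)-m(x;\eta'))\log m(x;\eta')\dmu(x)$, after which the identity is immediate. The only cosmetic difference is that the paper phrases the verification via the canonical divergence $D_{F^*,F}(\eta:\theta')$ (expanding $\theta'_i=\int(p_i-p_0)\log m(x;\eta')\dmu$ and using $\sum_i\eta_i p_i=m-\eta_0 p_0$), whereas you work directly with $B_{F^*}$; the algebra is the same.
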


In practice, we may consider $w$-GMMs~\cite{comix-2016}, Gaussian Mixture Models  sharing the same components.
\begin{corollary}[KL of $w$-GMMS as a Bregman divergence]
The KL between Gaussian Mixture Models  sharing the same components ($w$-GMM~\cite{comix-2016}) is equivalent to a Bregman divergence.
\end{corollary}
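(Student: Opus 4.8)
The plan is to obtain the corollary as an immediate specialization of Theorem~\ref{lemma:KLetaBD}. A $w$-GMM is, by definition, a $w$-mixture whose $k$ prescribed components $p_0,\ldots,p_{k-1}$ are fixed Gaussian densities $p_i(x)=\mathcal{N}(x;\mu_i,\Sigma_i)$; so it suffices to verify that such a family satisfies the hypothesis used in \S\ref{sec:igdf} to build the dually flat structure, namely that the prescribed component densities are linearly independent as functions of $x$. Once this holds, the manifold $\calM$ of $w$-GMMs is a mixture family of order $D=k-1$, the negative Shannon information $F^*(\eta)=-h(m(x;\eta))$ is strictly convex and differentiable (its Hessian $\nabla^2 F^*(\eta)$ being the positive-definite matrix exhibited above), and Theorem~\ref{lemma:KLetaBD} applies verbatim to yield $\KL(m_1:m_2)=B_{F^*}(\eta_1:\eta_2)$.

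The one substantive step is therefore the claim that \emph{pairwise distinct Gaussian densities are linearly independent}. I would prove it by a standard tail/analyticity argument: assuming $\sum_i \lambda_i\,\mathcal{N}(x;\mu_i,\Sigma_i)\equiv 0$, order the components by the asymptotic decay rate of their tails (controlled by the largest eigenvalue of $\Sigma_i$ and by $\mu_i$); the slowest-decaying term cannot be cancelled by the others, forcing its coefficient to vanish, and induction finishes the argument. Alternatively, taking Fourier transforms reduces the claim to the linear independence of the characteristic functions $\exp(i\mu_i^\top t-\tfrac12 t^\top\Sigma_i t)$, which are distinct entire functions of $t$ with differing quadratic exponents, hence independent. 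If two prescribed Gaussians happened to coincide one simply merges them and decreases $k$, so without loss of generality the components are distinct and the independence hypothesis is met.

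The main (indeed the only) obstacle is this linear-independence verification, which is genuinely needed: the mixture-family construction of \S\ref{sec:igdf} relies on it to ensure both that $\eta\mapsto m(x;\eta)$ is injective on $H^\circ$ and that $F^*$ is strictly convex, without which the Bregman representation could degenerate. I would also remark, echoing the discussion following Eq.~\ref{eq:BDFstar}, that for $w$-GMMs this equivalence is of theoretical value only, since neither $F^*(\eta)=-h(m(x;\eta))$ (the negative differential entropy of a GMM) nor its Legendre conjugate $F(\theta)$ admits a closed form; in practice one resorts to the mixture-entropy bounds recalled in \S\ref{sec:igdf} or to Monte-Carlo estimation, but this does not affect the identity asserted by the corollary.
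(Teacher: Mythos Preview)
Your proposal is correct and matches the paper's approach: the corollary is stated in the paper with no separate proof, being an immediate specialization of Theorem~\ref{lemma:KLetaBD} to the case where the prescribed components are Gaussian densities. You go somewhat further than the paper by explicitly checking the linear-independence hypothesis on distinct Gaussians (via tail decay or Fourier transform), a point the paper simply assumes; this is a welcome addition rather than a deviation, and your closing remark about the lack of closed form for $F^*$ and $F$ also echoes the paper's own caveats.
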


The information geometry of $(\calM,\KL)$ is said {\em dually flat}~\cite{IG-2016} because the
 dual Christoffel symbol coefficients $\Gamma_{ijk}(x)$ and $\Gamma_{ijk}^*(x)$ have all their coefficients equal to zero~\cite{IGDiv-2010}.
Therefore geodesics are visualized as straight Euclidean lines in either the $\eta$- or $\theta$-affine coordinate systems.

Although many works addressed the exponential family manifolds and their curved subfamilies~\cite{IGChernoff-2013,kernelIG-2011,IG-2016}, only a very few papers study the mixture families (and curved mixture subfamilies~\cite{Anaya-2007,hayashi-2014}).
Note that the   multinomial family (the family of finite categorical distributions)  is both an exponential family~\cite{EF-2009} and a mixture family~\cite{IG-2016}.

Remark: The concept of {\em mixture family} holds beyond the statistical manifold setting~\cite{Calin-2014,IG-2016}.

Since $\nabla^2 F(\theta)=(\nabla^2 F^*(\eta))^{-1}$, we have
$\nabla^2 F(\theta)=(\nabla^2 F^*(\nabla F^*(\theta)))^{-1}$.
The Hessian $\nabla^2 F^*(\eta)$ corresponds to the Fisher Information Matrix (FIM) $I(\eta)$ which in the case of $w$-mixtures is guaranteed to be positive definite (and {\em never} degenerate). 
The FIM of a $k$-Gaussian mixture model may be degenerate (for example, when parameters are chosen so that two mixture components become identical), but the FIM of a $w$-GMM is never degenerated.
See also~\cite{Miura-2011} for mixed coordinate representation of mixtures that yields (in some cases) a diagonal FIM.

In general, computing the Shannon information of mixtures is computational intractable.
See for example, the case of a mixture of two Gaussians analyzed in~\cite{EntropyGMM-2008}.

By using the dual coordinate systems, the Jeffreys divergence of $w$-mixtures can be written without using explicitly the generator $F$:
$$
J(m(x;\eta);m(x;\eta'))=\int m(x;\eta-\eta')\log\frac{m(x;\eta)}{m(x;\eta')} \dmu(x) =\inner{\eta'-\eta}{\theta'-\theta}.
$$
By a slight abuse of notation, we wrote $m(x;\eta-\eta')$ as a shortcut of $m(x;\eta)-m(x;\eta')$.
However, we use implicitly the gradient of the generator to compute the natural parameter $\theta=\nabla F^*(\eta)$.

\begin{table}%
\centering
\begin{tabular}{|l|c|r|}\hline
 & Exponential Family & Mixture Family\\ \hline
Density & $p(x;\theta)=\exp(\inner{\theta}{x}-F(\theta))$ & $m(x;\eta)=\sum_{i=1}^{k-1}\eta_i f_i(x)+c(x)$\\
&  & $f_i(x)=p_i(x)-p_0(x)$\\
Family/Manifold & $\calM=\{p(x;\theta)\ :\ \theta\in\Theta^\circ \}$ &  $\calM=\{m(x;\eta)\ :\ \eta\in H^\circ \}$ \\
Convex function & $F$: cumulant & $F^*$: negative entropy\\
$\equiv ax+b$ & & \\
Dual coordinates & moment $\eta=E[x]$ &  $\theta^i=h^\times(p_0:m)-h^\times(p_i:m)$ \\
Fisher Information $g=(g_{ij})_{ij}$ & $g_{ij}(\theta)=\partial_i\partial_j F(\theta)$ & $g_{ij}(\eta)=\int_\calX \frac{f_i(x)f_j(x)}{m(x;\eta)}\dmu(x)$ \\ \hline
& & $g_{ij}(\eta)=-\partial_i\partial_j h(\eta)$ \\
Christoffel symbol & $\Gamma_{ij,k}=\frac{1}{2}\partial_i\partial_j\partial_k F(\theta)$ &
$\Gamma_{ij,k}=-\frac{1}{2}\int_\calX \frac{f_i(x)f_j(x)f_k(x)}{m^2(x;\eta)} \dmu(x)$ \\ \hline
Entropy & $-F^*(\eta)$ & $-F^*(\eta)$\\
Kullback-Leibler divergence & $B_F(\theta_2:\theta_1)$ & $B_{F^*}(\eta_1:\eta_2)$\\
& $=B_{F^*}(\eta_1:\eta_2)$ & $=B_F(\theta_2:\theta_1)$\\ \hline
skew Jensen divergence & skew Bhattacharrya div.~\cite{BR-2011} & skew Jensen-Shannon div. (\S\ref{sec:bdjd})\\ \hline
\end{tabular}
\caption{Characteristics of the dually flat geometries of Exponential Family Manifolds (EFMs) and Mixture Family Manifolds (MFMs).}
\label{tab:comparison}
\end{table}

Finally, let us emphasize that an integral-based Bregman generator $F_\calP$ on a probability distribution $p_\theta$ belonging to a family $\calP=\{p_\theta\}$ induces a parametric Bregman generator $F(\theta):=F_\calP(p_\theta)$.
The induced Bregman divergence $B_F$ amount to a {\em statistical divergence} $D_{\calP,F}$ between the parametric densities.
This statistical divergence can then be relaxed to arbitrary densities to get a statistical divergence $D_F$.
Appendix~\ref{sec:BDStatDiv} shows how to reconstruct the statistical divergences from integral-based Bregman generators for the exponential families (reverse KL divergence) and the mixture families (KL divergence).

\subsection{Application of $w$-mixtures: Optimal KL-averaging aggregation \label{sec:optint}}

Let us consider a {\em cluster} of $m$ machines $M_1,\ldots, M_m$ with the independently and identically sampled  data-set $\calO$ partitioned into $m$ pieces: $\calO_1,\ldots,\calO_m$ with
 $|\calO_i|=n_i$.
Dataset $\calO_i$ is stored {\em locally} in the   memory of machine $M_i$.
Liu and Ihler~\cite{DistributedEstimation-EF-2014} proposed 
\begin{enumerate}
\item to estimate the $m$ models $m(x;\lambda_i)$ locally (say, via Maximum Likelihood Estimators, MLEs, $\hat{\lambda}_i$'s on the local samples $\calO_i$), and then

\item merge/aggregate those local model estimates on a {\em central node} by performing {\em KL-averaging integration}:
$$
{\hat{\lambda}^\KL}  =  \arg\min_{\lambda} \sum_{i=1}^m  \KL(m(x:\hat{\lambda}_i):m(x:\lambda)).
$$

This KL-averaging integration has to be compared with the global MLE $\theta^\ML$ on the full data-set $\calO$.
The MLE is {\em equivariant} for a monotonic transformation $g$: $\widehat{g(\lambda)}_\MLE=g(\hat\lambda_\MLE)$.
\end{enumerate}

When the models belong to the same {\em exponential family}~\cite{EF-2009} (e.g., Gaussian models of the Gaussian family), they showed that the {\em KL-averaging} model integration yields {\em no information loss}:
Indeed, for exponential families~\cite{EF-2009} with log-density $\log p_F(x;\theta) =t(x)^\top \theta-F(\theta)$ (with $\theta$ the natural parameters, sufficient statistics $t(x)$ and $F(\theta)$ the log-normalizer or cumulant function), the dual moment parameter is $\eta=E[t(x)]=\nabla F(\theta)$.
We can convex the moment parameter $\eta$ into the corresponding natural parameter by $\theta=\nabla F^*(\eta)=\nabla F^{-1}(\eta)$.
In that case, the KL integration~\cite{DistributedEstimation-EF-2014}  yields for equally partitioned data-sets 

$$
{\hat{\theta}^\KL}  =  \nabla F^{-1}\left(\frac{1}{m} \sum_{i=1}^m \nabla F({\hat{\theta}_i})\right).
$$

That is, there is no information loss.
For an arbitrary partitioning of the data-set $\calO$,  the local MLEs are $\hat\eta_i=\nabla F(\hat\theta_i)=\frac{1}{n_i} \sum_j t(x_j)$, and the 
global MLE is $\hat\eta^\ML=\nabla F(\hat\theta^\ML)=\frac{1}{n} \sum_j t(x_j)=\frac{1}{n}(\sum_{i\in [m]} \sum_{x\in\calO_i} t(x))$.
That is, $\hat\eta^\ML=\frac{1}{n} \sum_i n_i\hat\eta_i^\ML$, or $\hat\theta^\ML=\nabla F^{-1}(\sum_i w_i \nabla F(\hat\theta_i^\ML))$, with $w_i=\frac{n_i}{n}$. 
Notice that aggregation of exponential family models requires to manipulate {\em explicitly} the log-normalizer $F(\theta)$ and its inverse gradient function $\nabla F^{-1}$, see~\cite{DistributedEstimation-EF-2014}.

The MLE for an exponential family is also characterized by a Bregman centroid~\cite{kmle-2012} using the exponential family/Bregman duality $\log p_F(x;\theta) =-B_{F^*}(t(x):\eta)+F^*(t(x))$ with $\eta=\nabla F(\theta)$:
$$
\max_\theta \sum_i \log p_F(x_i;\theta)  \equiv \min_\eta \sum_i B_{F^*}(t(x_i):\eta).
$$
In distributed estimation, the global MLE  of the $m$ datasets for an exponential family is thus obtained by performing the optimal KL-averaging integration when $n_1=\ldots=n_m$.
That is, $\hat\eta_\KL$ is the MLE of $\calO$.

Interestingly, they also report experiments on GMMs~\cite{DistributedEstimation-EF-2014} (\S 5.2) that are not exponential families with information loss, and stress out that the ``KL average still performs well as the global MLE'' on the MNIST data-set~\cite{DistributedEstimation-EF-2014}.

For $\eta$-mixtures (mixture families including the $w$-GMMs), the  KL-averaging integration~\cite{DistributedEstimation-EF-2014,Amari-2007} is defined by the following optimization problem:

\begin{eqnarray}
\hat\eta^\KL &=&\arg\min_\eta  \sum_{i=1}^m \KL(m(x;\hat\eta_i):m(x;\eta)),\\
&=& \arg\min_\eta \sum_{i=1}^m B_{F^*}(\hat\eta_i:\eta).
\end{eqnarray}

Since the {\em right-sided Bregman centroid}~\cite{Bregmancentroid-2009} is always the center of mass {\em whatever} the chosen Bregman generator\footnote{Here, it is specially interesting since $F^*$ is not available in closed form, and we bypass its use.}, we end up with the {\em optimal KL-average integration} (best parameter) for $\eta$-mixtures:
$$
{\hat{\eta}}^\KL  =  \frac{1}{m} \sum_{i=1}^m {\hat{\eta}_i}.
$$  
(or equivalently, ${\hat{w}}^\KL  =  \frac{1}{m} \sum_{i=1}^m {\hat{w}_i}$).


\begin{theorem}[Optimal KL-averaging integration of $w$-mixtures]
KL-averaging integration of $w$-mixtures can be performed optimally without information loss.
\end{theorem}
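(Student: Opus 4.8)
The plan is to reduce the optimization defining $\hat\eta^\KL$ to a right-sided Bregman centroid problem and then invoke the generator-independence of such centroids. First I would use Theorem~\ref{lemma:KLetaBD}: since each local model $m(x;\hat\eta_i)$ and the candidate aggregate $m(x;\eta)$ are $w$-mixtures over the \emph{same} prescribed components, we have $\KL(m(x;\hat\eta_i):m(x;\eta)) = B_{F^*}(\hat\eta_i:\eta)$ with $F^*$ the negative Shannon entropy. Hence the KL-averaging objective $\sum_{i=1}^m \KL(m(x;\hat\eta_i):m(x;\eta))$ equals $\sum_{i=1}^m B_{F^*}(\hat\eta_i:\eta)$, a sum of Bregman divergences in which the free variable $\eta$ occupies the second (``right'') slot.

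Next I would identify the minimizer. For any differentiable strictly convex $F^*$ one computes $\nabla_\eta B_{F^*}(\hat\eta_i:\eta) = \nabla^2 F^*(\eta)\,(\eta - \hat\eta_i)$; summing over $i$ and using that $\nabla^2 F^*(\eta)\succ 0$ (established earlier for $\eta$-mixtures, and equal to the FIM) gives the stationarity condition $m\eta = \sum_i \hat\eta_i$. By convexity of $\sum_i B_{F^*}(\hat\eta_i:\cdot)$ this unique critical point is the global minimizer, so $\hat\eta^\KL = \frac1m\sum_{i=1}^m \hat\eta_i$. This is the standard ``right-sided Bregman centroid equals barycenter'' fact~\cite{Bregmancentroid-2009}; the point to stress is that the answer does not involve $F^*$ at all, so the aggregate is obtained in closed form even though the potentials $F^*$ and $F$ of the mixture family are intractable. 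Since the $\hat\eta_i$ all lie in the convex set $H^\circ$ (equivalently $\hat w_i\in\Delta_{k-1}^\circ$), so does $\hat\eta^\KL$; thus the aggregate is again a bona fide $w$-mixture and no projection step is needed.

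Finally I would argue the ``without information loss'' clause. The exact minimizer of the KL-averaging functional is attained --- no surrogate objective, no iterative approximation --- and it is simply the barycenter of the locally estimated parameters, exactly mirroring the exponential-family result of Liu--Ihler with the roles of $F$ (cumulant) and $F^*$ (negative entropy) interchanged: there one averages the $\eta$-images obtained through $\nabla F$ and then inverts, whereas here the $\eta$-coordinates are already the natural linear coordinates of the mixture family and are averaged directly. In particular, when the local estimates are additive in the data in the $\eta$-coordinate and the datasets are equally partitioned, $\hat\eta^\KL$ coincides with the corresponding centralized estimate, so the distributed pipeline loses nothing; by contrast, a general GMM is not a mixture family, KL is not a Bregman divergence on its parameters, and the aggregation step itself is only approximate.

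I do not anticipate a genuine technical obstacle: the KL/Bregman identity, the positive-definiteness of the mixture-family Hessian, and the barycenter characterization are all already in hand. The only delicate points are (i) verifying that the stationary point stays in the open domain $H^\circ$ rather than on its boundary, which is immediate from convexity of $H^\circ$, and (ii) pinning down precisely what ``no information loss'' should mean here --- exactness of the aggregation and its agreement with the centralized estimate under the appropriate hypotheses --- so that the claim is not over-stated beyond what the exponential-family analogy supports.
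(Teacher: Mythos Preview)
Your proposal is correct and follows essentially the same route as the paper: reduce the KL-averaging objective to $\sum_i B_{F^*}(\hat\eta_i:\eta)$ via Theorem~\ref{lemma:KLetaBD}, then observe that the right-sided Bregman centroid is the barycenter $\frac{1}{m}\sum_i\hat\eta_i$ independently of $F^*$. The paper simply cites~\cite{Bregmancentroid-2009} for the centroid fact, whereas you rederive it by differentiating and using $\nabla^2 F^*\succ 0$; your added checks (feasibility in $H^\circ$, the caveat on what ``no information loss'' should mean) are welcome hygiene but do not alter the argument.
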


Note that the local model estimators may not be efficient for mixtures in general.
In fact, global Maximum Likelihood (ML) optimization tackles an untractable log-sum maximization for mixtures, and  
 the exact MLE solution for these mixtures maybe transcendental~\cite{MLEGMMTranscendental-2015}.

Notice that the KL-averaging integration does not depend on the local inference methods used: They can even be different methods on each machine.
For exponential families, it makes sense to use the MLE because of its link with a Bregman centroid on the expectation parameters.
It is interesting to characterize the information loss for curved mixture subfamilies~\cite{Anaya-2007,hayashi-2014} according to the notion of statistical curvature.

Similarly, we can cluster a set of $w$-mixtures (like $w$-GMMs) using $k$-means methods~\cite{differentialnormal-2007,clusteringnormal-2009} with respect to the KL divergence: 
To assign a $w$-mixture $m(x;\eta)$ to a cluster $w$-mixture prototype $m(x;\eta_j^c)$, we need to estimate $\KL(m(x;\eta):m(x;\eta_j^c))=B_{F^*}(\eta:\eta_j^c)$ (say, using Monte-Carlo stochastic estimation). Then the  $w$-mixture prototype of each cluster is updated by taking the centroid of the $\eta$-coordinates, and the process is repeated until (local) convergence~\cite{differentialnormal-2007,clusteringnormal-2009} .

We report how $w$-mixtures can be inferred efficiently in~\cite{mmix-2017}.

Note that when the components of $w$-mixture belong to the same exponential family, i.e.
 $m(x;w)=\sum_i w_i p(x;\theta_i^c)$, then we can find the best distribution~\cite{schwander2013} of that exponential family that simplifies the $w$-mixture as follows:

\begin{eqnarray}
\theta^c_\mathrm{opt} &=& \arg\min_{\theta^c} \KL(m(x;\eta) : p_{\theta^c}(x)),\\
&=& \nabla F^{-1}(\sum_i w_i \nabla F(\theta_i^c)).
\end{eqnarray}

\subsection{Skew Jensen-Shannon divergences of $w$-mixtures\label{sec:bdjd}}

Let the {\em skew $\alpha$-Jensen-Shannon divergence}~\cite{js-1991}  be defined by  
$$
\JS_\alpha(p:q) \eqdef (1-\alpha)\KL(p:m_\alpha)+\alpha\KL(q:m_\alpha),
$$
for the mixture  $m_\alpha(x)=(1-\alpha)p(x)+\alpha q(x)$ with $\alpha\in (0,1)$,
and  define the {\em $\alpha$-Jensen divergence}~\cite{Zhang-2004,BR-2011} by
$$
J_{F^*,\alpha}(\eta_1:\eta_2) \eqdef 
	(1-\alpha)F^*(\eta_1)+\alpha F^*(\eta_2)-F^*((1-\alpha)\eta_1+\alpha\eta_2),$$
for the Shannon information
	$F^*(\eta)= -h(m(x;\eta))$.
	
 We have in the limit cases~\cite{Zhang-2004,BR-2011} for $m_1(x)=m(x;\eta_1)$ and $m_2(x)=m(x;\eta_2)$:
	
	\begin{eqnarray*}
	\lim_{\alpha\rightarrow 1^-} \frac{1}{\alpha (1-\alpha)} J_{F^*,\alpha}(\eta_1:\eta_2) = B_{F^*}(\eta_1:\eta_2)=\KL(m_1:m_2)\\
	\lim_{\alpha\rightarrow 0^+} \frac{1}{\alpha (1-\alpha)} J_{F^*,\alpha}(\eta_1:\eta_2) = B_{F^*}(\eta_2:\eta_1)=\KL(m_2:m_1)
	\end{eqnarray*}
	
Since the combination of $w$-mixtures is a $w$-mixture, we have
$$
m_\alpha(x) \eqdef (1-\alpha)m(x;\eta_1)+\alpha m(x;\eta_2)= m(x;(1-\alpha)\eta_1+\alpha\eta_2).
$$

Plugging the Shannon negative entropy $h$ in $F^*(\eta)=-h(m(x;\eta))$, we get 
$$
J_{F^*,\alpha}(\eta_1:\eta_2)=   h(m_\alpha)-(1-\alpha)h(m_1)-\alpha h(m_2).
$$

We can rewrite 
\begin{eqnarray}
J_{F^*,\alpha}(\eta_1:\eta_2)&=& \int \left( -((1-\alpha)m_1(x)+\alpha m_2(x))\log m_\alpha(x)\right.\nonumber \\ 
&& \left. +(1-\alpha)m_1(x)\log m_1(x)+\alpha m_2(x) \log m_2(x) \right) \dmu(x)\\
 &=& \int \left( (1-\alpha)m_1(x)\log \frac{m_1(x)}{m_\alpha(x)} + \alpha m_2(x)\log \frac{m_2(x)}{m_\alpha(x)}  \right)\dmu(x),\\
&=& (1-\alpha)\KL(m_1:m_\alpha)+\alpha\KL(m_2:m_\alpha).
\end{eqnarray}

Thus we get
\begin{equation}
J_{F^*,\alpha}(\eta_1:\eta_2)=(1-\alpha)\KL(m_1:m_\alpha) + \alpha\KL(m_2:m_\alpha)=\JS_\alpha(m_1:m_2).
\end{equation}

In particular, when $\alpha=\frac{1}{2}$, $J_{F^*,\frac{1}{2}}(\eta_1:\eta_2)=\frac{1}{2}\JS(m_1:m_2)$ is the {\em Jensen-Shannon divergence}~\cite{js-1991},
and when $\alpha\rightarrow 1$, $\frac{1}{1-\alpha }J_{F^*,\alpha}(\eta_1:\eta_2)=\KL(m_1:m_2)$.

\begin{theorem}[$\alpha$-Jensen-Shannon div. equivalent to $\alpha$-Jensen div. for $w$-mixtures]
The $\alpha$-Jensen-Shannon divergences $\JS_\alpha(m(x;\eta_1):m(x;\eta_2))$ between two $\eta$-mixtures amount is equivalent to the $\alpha$-Jensen divergences 
 $J_{F^*,\alpha}(\eta_1:\eta_2)$ between their $\eta$-mixture parameters: $\JS_\alpha(m(x;\eta_1):m(x;\eta_2))=J_{F^*,\alpha}(\eta_1:\eta_2)$.
\end{theorem}

\begin{corollary}[KL as a limit case of skew Jensen-Shannon divergence]
In the limit case, we have $\lim_{\alpha\rightarrow 1} \frac{1}{\alpha(1-\alpha) }J_{F^*,\alpha}(\eta_1:\eta_2)= 
\lim_{\alpha\rightarrow 1} \frac{1}{\alpha(1-\alpha)}\JS_\alpha(m(x;\eta_1):m(x;\eta_2))=
\KL(m_1:m_2)$. 
Similarly, we have $\lim_{\alpha\rightarrow 0} \frac{1}{\alpha(1-\alpha) }J_{F^*,\alpha}(\eta_1:\eta_2)= 
\lim_{\alpha\rightarrow 0} \frac{1}{\alpha(1-\alpha)}\JS_\alpha(m(x;\eta_1):m(x;\eta_2))=
\KL(m_2:m_1)$.
\end{corollary}

\begin{figure}
\centering
\includegraphics[width=\textwidth]{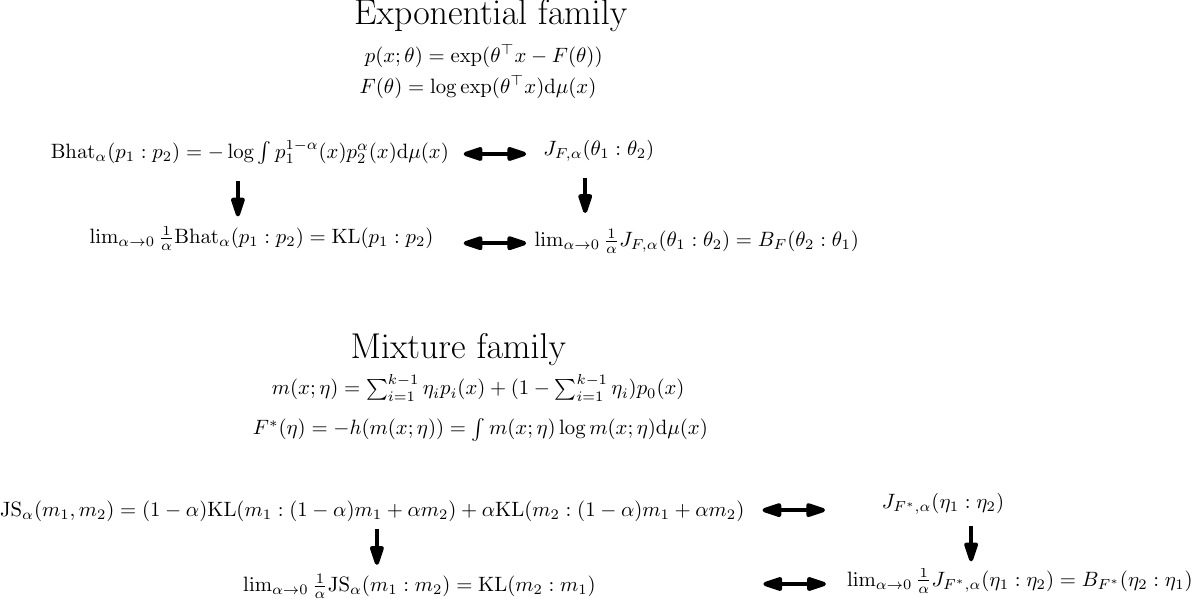}
\caption{Statistical divergences and corresponding parametric divergences: Comparisons of the exponential family with the mixture family.\label{fig:dist}}
\end{figure}

For exponential families, the skew Bhattacharrya divergences are shown to be equivalent to skew Jensen divergences~\cite{BR-2011}.
Figure~\ref{fig:dist} displays the relationships of the different statistical divergences with the equivalent parametric divergences.
Let us remark that we can approximate the Bregman divergence $B_F$ by a skew $\alpha$-Jensen divergence for small enough $\alpha>0$: This allows one to bypass
 the need to compute the gradient $\nabla F$.

\section{On mixture closures and divergences\label{sec:closure}}

\subsection{Upper bound on the KL divergence between arbitrary mixtures}

Let $m(x)\eqdef \sum_{i=0}^{k-1} w_i p_i(x)$ and $m'(x)\eqdef \sum_{i=0}^{k-1} w_i' p_i'(x)$ be two mixtures, both with exactly $k$ components.
We have the following upper bound~\cite{Goldberger-2005}:
	$$
	\KL(m:m') \leq \KL(w:w')+ \sum_{i=0}^{k-1} w_i \KL(p_i:p_i')
	$$
The bound can be strengthened by taking all $k!$ permutations $\sigma$:
$$
\KL(m:m') \leq \min_{\sigma}  \left\{ \KL(\sigma(w):w')+\sum_{i=0}^{k-1} w_{\sigma(i)} \KL(p_{\sigma(i)}:p_i')  \right\}
$$
	
The proof relies on the   {\em log-sum inequality}:
\begin{lemma}[log-sum inequality~\cite{Cover-2012}]
Given two finite positive number sequences $A\eqdef\{a_i\}_{i=0}^{k-1}$ and $B\eqdef\{b_i\}_{i=0}^{k-1}$ with $a\eqdef\sum_{i=0}^{k-1} a_i$ and $b\eqdef\sum_{i=0}^{k-1} b_i$.
 We have $\sum_{i=0}^{k-1} a_i\log\frac{a_i}{b_i} \geq a\log\frac{a}{b}$.
\end{lemma}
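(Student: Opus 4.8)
The plan is to derive the log-sum inequality from the elementary bound $\ln u\le u-1$ (valid for all $u>0$, with equality iff $u=1$), the very inequality that underlies Gibbs' inequality and the nonnegativity of relative entropy. First I would note that the base of the logarithm is immaterial: a change of base multiplies both sides by the same positive constant, so it suffices to treat the natural logarithm $\ln$. Since every $a_i$ and $b_i$ is assumed positive, each ratio appearing below is well defined and finite.

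Next I would recast the target inequality $\sum_{i=0}^{k-1} a_i\ln\frac{a_i}{b_i}\ge a\ln\frac{a}{b}$ in the equivalent form $\sum_{i=0}^{k-1} a_i\ln\frac{b_i\,a}{a_i\,b}\le 0$, using $a\ln\frac ab=\sum_i a_i\ln\frac ab$ to move the right-hand term over and negating. Applying $\ln u\le u-1$ with $u=\frac{b_i a}{a_i b}$ termwise then gives
\[
\sum_{i=0}^{k-1} a_i\ln\frac{b_i\,a}{a_i\,b}\ \le\ \sum_{i=0}^{k-1} a_i\left(\frac{b_i\,a}{a_i\,b}-1\right)\ =\ \frac{a}{b}\sum_{i=0}^{k-1} b_i-\sum_{i=0}^{k-1} a_i\ =\ \frac ab\cdot b-a\ =\ 0,
\]
which is exactly the claim. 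Inspecting the equality case of $\ln u\le u-1$, equality holds iff $\frac{b_i a}{a_i b}=1$ for every $i$, i.e. iff all ratios $a_i/b_i$ coincide (equivalently $a_i/b_i=a/b$).

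There is essentially no hard step: the only points to watch are that the logarithm base exceeds $1$ (so the base change is by a positive factor and the inequality direction is preserved) and that positivity of the $a_i,b_i$ keeps every ratio finite. If one prefers to avoid invoking $\ln u\le u-1$ explicitly, an alternative route is Jensen's inequality applied to the convex function $\phi(t)=t\log t$ on $(0,\infty)$ with weights $b_i/b$ at the points $a_i/b_i$: this yields $\frac1b\sum_i a_i\log\frac{a_i}{b_i}=\sum_i\frac{b_i}{b}\,\phi\!\left(\frac{a_i}{b_i}\right)\ge\phi\!\left(\sum_i\frac{b_i}{b}\cdot\frac{a_i}{b_i}\right)=\phi\!\left(\frac ab\right)=\frac ab\log\frac ab$, and multiplying through by $b$ gives the result. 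Either way the argument is only a couple of lines.
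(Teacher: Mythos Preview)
Your proof is correct. Your primary route via the elementary bound $\ln u\le u-1$ is a genuinely different and more self-contained argument than the paper's: the paper does not prove the log-sum inequality directly but instead establishes the more general convex-sum inequality $\sum_i a_i f(b_i/a_i)\ge a f(b/a)$ by Jensen's inequality (with weights $a_i/a$ at the points $b_i/a_i$), and then recovers the log-sum inequality by specializing to $f(u)=u\log u$ with the roles of $A$ and $B$ swapped. Your approach avoids invoking Jensen altogether and yields the equality condition transparently; the paper's approach buys generality (it covers all convex $f$ at once, which the paper later reuses for $f$-divergence bounds). Your alternative Jensen argument with $\phi(t)=t\log t$ and weights $b_i/b$ is essentially the paper's argument after the role swap.
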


We shall prove a stronger {\em convex-sum inequality}:
\begin{lemma}[convex-sum inequality~\cite{Csiszar-2004}]
Given two finite positive number sequences $A\eqdef\{a_i\}_{i=0}^{k-1}$ and $B\eqdef\{b_i\}_{i=0}^{k-1}$ with $a\eqdef\sum_{i=0}^{k-1} a_i$ and $b\eqdef\sum_{i=0}^{k-1} b_i$, and $f$ a 
convex function. We have:
\begin{equation*}\label{eq:lsif}
\sum_{i=0}^{k-1} a_i f\left(\frac{b_i}{a_i}\right) \geq a f\left(\frac{b}{a}\right).
\end{equation*}
\end{lemma}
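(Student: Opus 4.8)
The plan is to recognize this as nothing more than Jensen's inequality applied to the probability weights obtained by normalizing the sequence $A$. First I would set $\lambda_i \eqdef a_i/a$ for $i=0,\ldots,k-1$; since each $a_i>0$ and $\sum_i a_i=a$, the vector $(\lambda_0,\ldots,\lambda_{k-1})$ lies in the open simplex $\Delta_{k-1}^\circ$. Next I would put $x_i \eqdef b_i/a_i$, which is a well-defined positive real since $a_i,b_i>0$, and compute the barycenter $\sum_i \lambda_i x_i = \frac{1}{a}\sum_i b_i = b/a$. In particular $b/a$ is a convex combination of the $x_i$'s, hence lies in any interval on which $f$ is assumed convex.

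Then I would invoke Jensen's inequality for the convex function $f$, namely $f\!\left(\sum_i \lambda_i x_i\right)\le \sum_i \lambda_i f(x_i)$, which reads $f(b/a)\le \sum_i \frac{a_i}{a} f(b_i/a_i)$. Multiplying through by $a>0$ gives exactly $\sum_{i=0}^{k-1} a_i f(b_i/a_i)\ge a\,f(b/a)$, the claimed inequality. If one wants the strict version, it follows that equality holds iff $f$ is affine on the smallest interval containing the $x_i$ or all the ratios $b_i/a_i$ coincide, by the equality case of Jensen.

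Finally I would record the specializations that are used in the surrounding text: taking $f(u)=-\log u$ recovers the log-sum inequality, since then $a_i f(b_i/a_i)=a_i\log(a_i/b_i)$ and $a f(b/a)=a\log(a/b)$; and, for fixed $x$, substituting $a_i\leftarrow \lambda_i p_i(x)$, $b_i\leftarrow \lambda_i q_i(x)$ and integrating the resulting pointwise inequality over $\calX$ yields the joint convexity of $I_f$ (and thence the mixture upper bounds on $\KL$ stated above). The only real subtlety, rather than an obstacle, is bookkeeping about the domain of $f$: one must ensure $f$ is convex on an interval containing every ratio $b_i/a_i$, which is automatic here because these are positive reals and $f$ is taken convex on $(0,\infty)$, and the barycenter $b/a$ then lies in that interval by convexity of the interval itself.
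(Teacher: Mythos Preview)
Your proof is correct and is essentially identical to the paper's own argument: the paper also sets $w_i\eqdef a_i/a$, $x_i\eqdef b_i/a_i$, applies Jensen's inequality, and multiplies through by $a$. Your additional remarks on the equality case and on specializations go slightly beyond what the paper records, but the core proof matches exactly.
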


This later lemma generalizes the log-sum inequality obtained for $f(u)=u\log u$, a strictly convex function (and swapping role $A\leftrightarrow B$),
or for $f(u)=-\frac{1}{u}\log u$ which is a strictly convex on $u>0$.

\begin{proof}
Recall the {\em Jensen (discrete) inequality}~\cite{Niculescu-2006} for a convex function $f$:
	$$
	\sum_{i=0}^{k-1} w_i f(x_i) \geq  f\left(\sum_{i=0}^{k-1} w_i x_i\right),\quad w\in \Delta_k^\circ
	$$
	
	Let $w_i\eqdef \frac{a_i}{a}$ and $x_i\eqdef\frac{b_i}{a_i}$. Then it comes that
	$$
	\sum_{i=0}^{k-1} \frac{a_i}{a} f\left(\frac{b_i}{a_i}\right)   \geq  f\left(\sum_{i=0}^{k-1} \frac{\cancel{a_i}}{a}\frac{b_i}{\cancel{a_i}}\right).
	$$
	Finally, we get the convex-sum inequality
	$$
	\sum_{i=0}^{k-1}  a_i f\left(\frac{b_i}{a_i}\right) \geq a f\left(\frac{b}{a}\right).
	$$
	\end{proof}

Now we are ready to prove $\KL(m:m') \leq \KL(w:w')+\sum_{i=0}^{k-1} w_i \KL(p_i:p_i')$ using the log-sum inequality.
Let $a_i\eqdef w_i p_i(x)$ and $b_i \eqdef w_i'p_i'(x)$ so that $a=m(x)$ and $b=m'(x)$:

\begin{eqnarray*} 
\sum_{i=0}^{k-1} a_i\log\frac{a_i}{b_i} &\geq& a\log\frac{a}{b},\\
\sum_{i=0}^{k-1} w_i p_i(x)\log\frac{w_i p_i(x)}{w_i'p_i'(x)} &\geq& m(x)\log\frac{m(x)}{m'(x)},
\end{eqnarray*}  
Integrating over the support $\calX$, we get 
\begin{eqnarray*} 
\sum_{i=0}^{k-1} w_i \log\frac{w_i}{w_i'} \underbrace{\int p_i(x)\dmu(x)}_{=1} + w_i\KL(p_i:p_i')  &\geq& \KL(m:m'),\\
\KL(w:w') + \sum_i w_i \KL(p_i:p_i')  &\geq&  \KL(m:m').
\end{eqnarray*}  

In particular, we have for $w$-mixtures: 
\begin{equation}
\KL(m:m')\leq \KL(w:w').
\end{equation}
Furthermore, it holds that
$$
\KL(w:w')\leq \log \frac{\max_i w_i}{\min_i w_i'}\leq -\log \min_i w_i'.
$$
Note that the upper bound may tend to infinity when $\min_i w_i'\rightarrow\infty$.

\subsection{Divergence inequalities for $w$-mixtures}

\begin{theorem}[Upper bound on $f$-divergences of $w$-mixtures]
The $f$-divergence $I_f(m(x;w) : m(x;w'))$ between any two $w$-mixtures is upper bounded by $I_f(w:w')=\sum_{i=0}^{k-1} w_i f(\frac{w_i'}{w_i})$.
\end{theorem}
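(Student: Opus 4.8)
The plan is to reduce the statement to the \emph{convex-sum inequality} proved just above, applied pointwise in $x$, followed by an integration over $\calX$. First I would write the $f$-divergence between the two $w$-mixtures explicitly as
$$
I_f(m(x;w):m(x;w')) = \int_\calX m(x;w)\, f\!\left(\frac{m(x;w')}{m(x;w)}\right)\dmu(x),
$$
and use the crucial structural fact that $m(x;w)=\sum_{i=0}^{k-1} w_i p_i(x)$ and $m(x;w')=\sum_{i=0}^{k-1} w_i' p_i(x)$ are built from the \emph{same} component densities $p_i$, merely reweighted.

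Next, for each fixed $x\in\calX$, I would set $a_i \eqdef w_i p_i(x)$ and $b_i \eqdef w_i' p_i(x)$, so that $a\eqdef\sum_{i} a_i = m(x;w)$, $b\eqdef\sum_i b_i = m(x;w')$, and $b_i/a_i = w_i'/w_i$ on the set where $p_i(x)>0$. The convex-sum inequality (with the given convex $f$) then yields the pointwise bound
$$
\sum_{i=0}^{k-1} w_i p_i(x)\, f\!\left(\frac{w_i'}{w_i}\right) \;=\; \sum_{i=0}^{k-1} a_i\, f\!\left(\frac{b_i}{a_i}\right) \;\geq\; a\, f\!\left(\frac{b}{a}\right) \;=\; m(x;w)\, f\!\left(\frac{m(x;w')}{m(x;w)}\right).
$$
I would then integrate this inequality over $\calX$ against $\mu$: on the right we recover exactly $I_f(m(x;w):m(x;w'))$, and on the left, since each $p_i$ is a probability density, $\int_\calX p_i(x)\dmu(x)=1$, so the left-hand side collapses to $\sum_{i=0}^{k-1} w_i f(w_i'/w_i)=I_f(w:w')$. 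This is the claimed bound; taking $f(u)=-\log u$ also recovers $\KL(m:m')\le\KL(w:w')$ noted earlier.

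I do not expect a genuine obstacle. The only point requiring a little care — the ``main obstacle'' in a purely bookkeeping sense — is the treatment of zeros: on the $\mu$-null set where $m(x;w)=0$ every $p_i(x)$ vanishes, so both integrands vanish, and the terms with $p_i(x)=0$ contribute nothing on either side under the usual convention $0\cdot f(0/0)=0$. If one wants full rigour for generators $f$ not finite at $0$, one replaces $f$ by the extended generator $\bar f(u)=f(u)-f'(1)(u-1)$, which leaves both $I_f(m:m')$ and $I_f(w:w')$ unchanged, and then repeats the same argument.
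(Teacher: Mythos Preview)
Your proposal is correct and follows essentially the same route as the paper: apply the convex-sum inequality pointwise with $a_i=w_ip_i(x)$, $b_i=w_i'p_i(x)$ so that the ratio $b_i/a_i=w_i'/w_i$ no longer depends on $x$, then integrate using $\int p_i\,\dmu=1$. Your additional remarks on handling zeros and on the extended generator $\bar f$ go slightly beyond what the paper writes, but the core argument is identical.
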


\begin{proof}
We use a generalization of the log-sum inequality  to any convex function $f$ (see~\cite{Csiszar-2004}, p. 448):
For two finite positive number sequences $A\eqdef\{a_i\}_{i=0}^{k-1}$ and $B\eqdef\{b_i\}_{i=0}^{k-1}$, we have
$\sum_i a_i f\left(\frac{b_i}{a_i}\right) \geq a f\left(\frac{b}{a}\right)$.
It follows that 
$m(x;w)f\left(\frac{m(x;w')}{m(x;w)}\right) \leq \sum_{i=0}^{k-1} w_i   p_i(x)  f\left( \frac{w_i' p_i(x)  }{w_i p_i(x) }\right)=  \sum_{i=0}^{k-1}  w_i  f\left(\frac{w_i' }{w_i}\right)p_i(x)$
Carrying out integration on the support $\calX$, we get
$I_f(m(x;w) : m(x;w'))\leq   I_f(w:w')$ since $\int_\calX p_i(x)\dmu(x)=1$.
Notice that the KL divergence is a $f$-divergence obtained for the generator $f(u)=-\log u$.
\end{proof}

For the KL divergence of $w$-mixtures, we thus have:
\begin{equation}
\KL(m(x;w) : m(x;w'))\leq \KL(w:w').
\end{equation}

The  KL divergence can be extended to positive measures $\tp$ and $\tq$ (not necessarily normalized) by:
$$
\KL(\tp : \tq)=\int_\calX \left( \tp(x)\log \frac{\tp(x)}{\tq(x)} + \tq(x)- \tp(x) \right) \dmu(x).
$$
For non-normalized $w$-mixtures $m(x;\tw')$ and $m(x;\tw)$, it comes that 
$$
\KL(m(x;\tw') : m(x;\tw)) \leq \KL(\tw:\tw')=\sum_i w_i\log\frac{w_i}{w_i'} + w_i'-w_i.
$$

Let $\tm(x)=m(x;\tw)$ and $\tm'(x)=m(x;\tw')$.
In fact, when $\tm'(x)=\lambda \tm(x)$ (for $\tw'=\lambda \tw$), we have
$$
\KL(m(x;\tw') : m(x;\tw)) = (\lambda-1)-\log\lambda.
$$
So in that particular case, we can compute in closed form the extended KL between two unnormalized GMM models (in that case, $w$-GMMs).

We can upper bound $\KL(w:w')$ using the maximum and the minimum positive weights as follows
$$
\KL(w:w') \leq \log\frac{\max_i w_i}{\min_i w_i'}.
$$

Thus when the minimum weight of $w$-mixtures is at least $\eps>0$ and at most $\frac{1}{k}$ (fat $w$-mixtures), 
we have $I_f(w:w')=\sum_{i=0}^{k-1} w_i\log\frac{w_i}{w_i'}\leq \sum_{i=0}^{k-1} w_i\log {w_i} - \log\eps \leq -\log \eps$ since $w_i\leq 1$.
That is, the $f$-divergence of fat $w$-mixtures is upper bounded.

In general, the discrete $f$-divergence can be upper bounded by $\max_i f\left(\frac{w_i'}{w_i}\right)$ so that
$$
I_f(m(x;w) : m(x;w')) \leq I_f(w:w') \leq \max_i f\left(\frac{w_i'}{w_i}\right).
$$

Let us report  lower and upper bounds for the KL divergence between $w$-mixtures using the mediant inequalities~\cite{Mediant-1990}.
Let $[k]=\{1,\ldots, k\}$.

\begin{proposition}[Mediant inequalities]
Let $n_1,\ldots, n_k$ and $m_1,\ldots,m_k$ be two sequences of positive reals.
Then the following inequalities hold:
$$
\min_{i\in [k]} \frac{n_i}{m_i} \leq \frac{\sum_{i=1}^k n_i}{\sum_{i=1}^k  m_i}\leq \max_{i\in [k]} \frac{n_i}{m_i}.
$$
\end{proposition}
These lower and upper inequalities are called the mediant inequalities because given $\frac{a}{b}<\frac{c}{d}$ for positive $a,b,c,d>0$, the mediants of these fractions is the fraction $\frac{a+b}{c+d}$ which satisfies $\frac{a}{b}<\frac{a+b}{c+d}<\frac{c}{d}$. 
(The proof   dates back to Cauchy in 1816 who studied Farey sequences~\cite{svalbe2003farey}.)  

By considering positive weights $w_i$'s such that $n_i=w_ia_i$ and $m_i=w_ib_i$, we get the weighted mediant inequality:

\begin{proposition}[Weighted mediant inequality]
Let $a_1,\ldots, a_k$, $b_1,\ldots,b_k$, and $w_1,\ldots, w_k$ be positive reals.
Then we have:
$$
\min_{i\in [k]} \frac{a_i}{b_i} \leq \frac{\sum_{i=1}^k w_ia_i}{\sum_{i=1}^k  w_ib_i}\leq \max_{i\in [k]} \frac{a_i}{b_i}.
$$
\end{proposition}

Since 
$\frac{\sum_{i=1}^k n_i}{\sum_{i=1}^k  m_{\sigma(i)}}=\frac{\sum_{i=1}^k n_i}{\sum_{i=1}^k  m_i}$, we may consider any permutation $\sigma$ for the denominator and get:
$$
\min_{i\in [k]} \frac{n_i}{m_{\sigma(i)}} \leq  \frac{\sum_{i=1}^k n_i}{\sum_{i=1}^k  m_i} \leq \max_{i\in [k]} \frac{n_i}{m_{\sigma(i)}}.
$$

By taking the maximum lower bound and minimum lower bound with respect to all $k!$ permutations, we get the following improved mediant inequality:

$$
\max_\sigma \min_{i\in [k]} \frac{n_i}{m_{\sigma(i)}} \leq  \frac{\sum_{i=1}^k n_i}{\sum_{i=1}^k  m_i} \leq \min_\sigma\max_{i\in [k]} \frac{n_i}{m_{\sigma(i)}}.
$$

\begin{proposition}[Improved Mediant inequality]
Let $n_1,\ldots, n_k$ and $m_1,\ldots,m_k$ be  positive reals.
Then we have:
$$
\frac{\min_{i\in [k]} n_i}{\max_{i\in [k]} m_i} \leq \max_\sigma \min_{i\in [k]} \frac{n_i}{m_{\sigma(i)}} \leq  \frac{\sum_{i=1}^k n_i}{\sum_{i=1}^k  m_i} \leq \min_\sigma\max_{i\in [k]} \frac{n_i}{m_{\sigma(i)}}\leq \frac{\max_{i\in [k]} n_i}{\min_{i\in [k]} m_i}.
$$
\end{proposition}

Now, consider two $w$-mixtures $m(x)=\sum_{i=1}^k w_ip_i(x)$ and $m'(x)=\sum_{i=1}^k w_i'p_i(x)$ of a mixture family.
Then using the weighted mediant inequality (with the weights $\alpha_i=p_i(x)$), we have:
$$
\min_{i\in[k]}\frac{w_i}{w_i'} \leq \frac{\sum_{i=1}^k w_ip_i(x)}{\sum_{i=1}^k w_i'p_i(x)}\leq \max_{i\in[k]}\frac{w_i}{w_i'}
$$
Therefore, we get the following proposition:
\begin{proposition}[Bounds for the KLD between $w$-mixtures]
$$
\log \min_{i\in[k]}\frac{w_i}{w_i'} \leq \KL[m:m'] \leq \log\max_{i\in[k]}\frac{w_i}{w_i'}
$$
\end{proposition}
Notice that the $w$-mixtures can be multivariate here, and that $\KL[m_1:m_2]\leq \KL[w:w']\leq \log\max_{i\in[k]}\frac{w_i}{w_i'}$.
Thus we have:
$$
\log \min_{i\in[k]}\frac{w_i}{w_i'} \leq \KL[m:m'] \leq \KL[w:w']\leq  \log\max_{i\in[k]}\frac{w_i}{w_i'}
$$

For the KLD between two univariate mixtures of $k$ components (not necessarily with the same components), we may also use the improved mediant inequality so that we have:
$$
\max_\sigma  \min_i \frac{w_ip_i(x)}{w_{\sigma(i)}'p_{\sigma(i)}'(x)} \leq \frac{\sum_{i=1}^k w_ip_i(x)}{\sum_{i=1}^k w_i'p_i'(x)}\leq 
\min_\sigma \max_i \frac{w_ip_i(x)}{w_{\sigma(i)}'p_{\sigma(i)}'(x)}. 
$$

Therefore we get:
$$
\int m_1(x) \max_\sigma  \min_i \frac{w_ip_i(x)}{w_{\sigma(i)}'p_{\sigma(i)}'(x)}\dx\leq \KL[m_1:m_2]\leq 
\int m_1(x) \min_\sigma \max_i \frac{w_ip_i(x)}{w_{\sigma(i)}'p_{\sigma(i)}'(x)}\dx.
$$

The extrema $\max_\sigma  \min_i \frac{w_ip_i(x)}{w_{\sigma(i)}'p_{\sigma(i)}'(x)}$ 
and $\min_\sigma \max_i \frac{w_ip_i(x)}{w_{\sigma(i)}'p_{\sigma(i)}'(x)}$ change combinatorially at discrete locations of the support so that we can partition the support of the mixtures into elementary intervals. 
Then for each elementary interval, we can calculate the definite elementary integrals using the CDF of the univariate component distributions, similar to~\cite{lsemixture-2016}.

Define the {\em $\alpha$-divergence}~\cite{Amari-2009} for $\alpha\in\bbR\backslash\{0,1\}$ by
$$
I_\alpha(p:q) \eqdef \frac{1}{\alpha(1-\alpha)} \left( 1-\int_\calX p^\alpha(x)q^{1-\alpha}(x) \dmu(x)\right) = I_{-\alpha}(q:p),
$$
with $\lim_{\alpha\rightarrow 1} I_\alpha(p:q)=\KL(p:q)$ and $\lim_{\alpha\rightarrow 0} I_\alpha(p:q)=\KL(q:p)$ (reverse Kullback-Leibler divergence).
We can define the $\alpha$-divergences according to the {\em Chernoff $\alpha$-coefficient}~\cite{Chernoff-2011}: 
$$
c_\alpha(p:q) \eqdef \int_\calX p^\alpha(x)q^{1-\alpha} \dmu(x)= \int q(x) \left(\frac{p(x)}{q(x)}\right)^{\alpha} \dmu(x).
$$
We have $I_\alpha(p:q) = \frac{1}{\alpha(1-\alpha)} (1-c_\alpha(p:q))$.
	
For $w$-mixtures, the following inequalities hold:
$$
\frac{\lambda}{\Lambda'} \leq \frac{p(x)}{q(x)} \leq \frac{\Lambda}{\lambda'},
$$ 
with $\lambda=\min_i w_i\leq\frac{1}{k}$, $\lambda'=\min_i w_i'\leq\frac{1}{k}$, $\Lambda=\max_i w_i\geq\frac{1}{k}$ and $\Lambda'=\max_i w_i'\geq\frac{1}{k}$.
	
Since function $x^\alpha$ is increasing when $\alpha> 0$ and decreasing  when $\alpha< 0$ (for $x>0$), we get
	\begin{eqnarray*}
	   \left(\frac{\lambda}{\Lambda'}\right)^{\alpha}   &\leq& c_\alpha(p:q)  \leq  \left(\frac{\Lambda}{\lambda'}\right)^{\alpha},\quad \alpha> 0,\\
		 \left(\frac{\Lambda}{\lambda'}\right)^{\alpha}   &\leq& c_\alpha(p:q)  \leq  \left(\frac{\lambda}{\Lambda'}\right)^{\alpha},\quad \alpha< 0.\\
	\end{eqnarray*}
	
It follows bounds on $\alpha$-divergences and related R\'enyi and Tsallis divergences~\cite{RT-2011} according to the mixture weight extrema.

\subsection{On $w$-mixture closures}
The manifold $\calM$ of $w$-mixtures is parameterized by the {\em open} probability simplex $\Delta_{k-1}^\circ$.
When topologically closing the manifold $\calM$, we consider $\bar\Delta_{k-1}$.
Take a $l$-face of the $(d-1)$-dimensional simplex $\Delta_{k-1}^\circ$.
When $l>0$, the sub-simplex $\sigma\in \bar\Delta_{k-1}$ is a $l$-dimensional simplex, and $\sigma^\circ$ parameterizes a $w$-mixture family of order $l>0$. 
In the extreme case, we consider order-1 $w$-mixture induced by a simplex edge $\sigma_1\in\Delta_{k-1}^\circ$ with extremity component distributions $p$ and $q$.
For example, distributions $p$ and $q$ can be Gaussian mixture models.
Define $m^\epsilon(p,q)=(1-\epsilon)p+\epsilon q=p+\epsilon (q-p)=m^{1-\eps}(q:p)$ for $\epsilon\in [0,1]$.
In the limit cases, the $w$-mixtures $m^\epsilon$ yields (with $w\in\Delta_1^\circ$):
$\lim_{\epsilon\rightarrow 0} m^\epsilon(p,q) =\lim_{\epsilon\rightarrow 1} m^\epsilon(q,p) =p$ and
$\lim_{\epsilon\rightarrow 1} m^\epsilon(p,q)=\lim_{\epsilon\rightarrow 0} m^\epsilon(q,p)=q$
Let $I_f^\eps(p:q) \eqdef I_f(m^\epsilon(p,q),m^\epsilon(q,p))$.
How far is $I_f^\eps(p:q)$ from its closure $I_f(p:q)$?

On one hand, we have the following theorem:
\begin{theorem}[Total variation continuity]
We have the following identity:
\begin{equation}
\TV^\eps(p,q)= |1-2\epsilon| \TV(p,q),
\end{equation}
since $m^\epsilon(p,q)-m^\epsilon(q,p)=(1-2\epsilon)(p-q)$.
\end{theorem}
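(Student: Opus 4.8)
The plan is to reduce the claim to the definition of total variation, $\TV(p,q)=\frac12\int_\calX |p(x)-q(x)|\dmu(x)$, and to exploit the linearity of $m^\epsilon$ in its two arguments. First I would expand the pointwise difference of the two order-$1$ $w$-mixtures: since $m^\epsilon(p,q)=(1-\epsilon)p+\epsilon q$ and $m^\epsilon(q,p)=(1-\epsilon)q+\epsilon p$, the $(1-\epsilon)$ and $\epsilon$ contributions recombine to give $m^\epsilon(p,q)(x)-m^\epsilon(q,p)(x)=(1-2\epsilon)\bigl(p(x)-q(x)\bigr)$, which is exactly the identity already quoted in the statement. Both $m^\epsilon(p,q)$ and $m^\epsilon(q,p)$ are genuine probability densities (a convex combination of densities is a density), so $\TV^\epsilon(p,q)$ is well defined.

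Second, I would substitute this difference into the total-variation integral:
\[
\TV^\epsilon(p,q)=\TV\bigl(m^\epsilon(p,q),m^\epsilon(q,p)\bigr)=\frac12\int_\calX \bigl|(1-2\epsilon)\bigl(p(x)-q(x)\bigr)\bigr|\dmu(x).
\]
Third, because $1-2\epsilon$ does not depend on $x$, the absolute value factors as $|1-2\epsilon|\,|p(x)-q(x)|$, and pulling the constant out of the integral yields $|1-2\epsilon|\cdot\frac12\int_\calX|p(x)-q(x)|\dmu(x)=|1-2\epsilon|\,\TV(p,q)$, which is the asserted identity.

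There is essentially no obstacle: the whole argument is a one-line computation once the linear-combination identity is written down. The only points worth a remark are that $\epsilon\in[0,1]$ and that the formula correctly captures the two natural extremes in a single expression — the endpoints $\epsilon\in\{0,1\}$, where $m^\epsilon(p,q)$ and $m^\epsilon(q,p)$ are just $\{p,q\}$ and $\TV^\epsilon=\TV$, and the midpoint $\epsilon=\tfrac12$, where $m^{1/2}(p,q)=m^{1/2}(q,p)=\frac{p+q}{2}$ so that $\TV^{1/2}=0$. Alternatively one could derive the same conclusion from the $f$-divergence representation $\TV=I_f$ with $f(u)=\frac12|u-1|$ together with the behaviour of $I_f$ under a common affine reparametrization, but the direct integral manipulation above is the shortest route.
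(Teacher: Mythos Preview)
Your proof is correct and follows exactly the same approach as the paper: the paper's entire justification is the single clause ``since $m^\epsilon(p,q)-m^\epsilon(q,p)=(1-2\epsilon)(p-q)$'' embedded in the theorem statement, and you have simply written out the straightforward substitution into the total-variation integral that this identity makes obvious. There is nothing to add.
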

Thus  $\lim_{\eps\rightarrow 0}\TV^\eps(p,q)=\lim_{\eps\rightarrow 1}\TV^\eps(p,q)=\TV(p,q)$.

On the over hand, $\KL^\epsilon(p:q) \eqdef \KL(m^\epsilon(p,q):m^\epsilon(q,p))$ has been shown to amount to a Bregman divergence. 
That is, $\KL^\eps(p:q)=B_{F^*}(\eps:1-\eps)$ for 1D generator $F^*(\eta)=\int_\calX (p(x)+\eta(q(x)-p(x)))\log(p(x)+\eta(q(x)-p(x)))\dmu(x)$.
By using the fact that the Bregman divergence is the tail of a first-order Taylor expansion~\cite{IG-2016}, we get using Lagrange exact reminder: 
$\KL^\eps(p:q) = \frac{1}{2}(1-2\epsilon)^2 (F^*)''(\eta)$ for $\eta\in [\epsilon,1-\epsilon]$.
However, the KL between $p$ and $q$ may potentially be infinite so that in general $\forall \eps\not =0, \KL^\epsilon(p:q)\not = \KL(p:q)$.

Using the {\em joint convexity} of the KL divergence, we can show that
$\KL^\eps(p:q)  \leq  \KL(p:q)+ \epsilon^2 J(p;q)$, where $J$ is Jeffreys divergence.

Let us relate the $f$-divergence between the 1D $\eta$-mixture and its extremities (closure) as follows:

\begin{theorem}[$f$-divergence inequalities]
We have the following inequalities:
\begin{eqnarray}
I_f^\eps(p:q) &\leq& (1-\eps)I_f(p:q) + \eps I_f(q:p), \label{eq:first}\\
I_f^\eps(p:q) &\leq&  (1-\eps) f\left(\frac{\eps}{1-\eps}\right) + \eps  f\left(\frac{1-\eps}{\eps}\right). \label{eq:second}
\end{eqnarray}
When $I_f$ is symmetric ($f=f^\diamond$), $I_f^\eps(p:q) \leq I_f(p:q)$.
That is, mixing distributions decrease symmetrized $f$-divergence values.
\end{theorem}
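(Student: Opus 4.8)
The plan is to obtain Eq.~\ref{eq:first} and Eq.~\ref{eq:second} from one and the same tool, namely the convex-sum inequality $\sum_i a_if(b_i/a_i)\ge af(b/a)$ stated above, applied pointwise to the integrand of $I_f^\eps(p:q)$, and then to read off the symmetric case as a one-line corollary of Eq.~\ref{eq:first}. Write $a(x)\eqdef m^\eps(p,q)(x)=(1-\eps)p(x)+\eps q(x)$ and $b(x)\eqdef m^\eps(q,p)(x)=\eps p(x)+(1-\eps)q(x)$, so that by definition $I_f^\eps(p:q)=\int_\calX a(x)\,f\!\left(b(x)/a(x)\right)\dmu(x)$. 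The key observation is that the numerator $b(x)$ and the denominator $a(x)$ can each be split into two positive summands in \emph{two} different ways, and each splitting, fed into the convex-sum inequality, yields one of the two claimed bounds.

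For Eq.~\ref{eq:first} I would use the ``diagonal'' pairing $a_0=(1-\eps)p(x),\ b_0=(1-\eps)q(x),\ a_1=\eps q(x),\ b_1=\eps p(x)$; then $a_0+a_1=a(x)$, $b_0+b_1=b(x)$, and the convex-sum inequality gives $a(x)f(b(x)/a(x))\le (1-\eps)p(x)f(q(x)/p(x))+\eps q(x)f(p(x)/q(x))$ for a.e.\ $x$, so integrating over $\calX$ and using that $p,q$ are normalized produces $I_f^\eps(p:q)\le (1-\eps)I_f(p:q)+\eps I_f(q:p)$. For Eq.~\ref{eq:second} I would instead use the ``horizontal'' pairing $a_0=(1-\eps)p(x),\ b_0=\eps p(x),\ a_1=\eps q(x),\ b_1=(1-\eps)q(x)$; now the ratios $b_i/a_i$ are the \emph{constants} $\eps/(1-\eps)$ and $(1-\eps)/\eps$, and the same inequality gives $a(x)f(b(x)/a(x))\le (1-\eps)f\!\left(\tfrac{\eps}{1-\eps}\right)p(x)+\eps f\!\left(\tfrac{1-\eps}{\eps}\right)q(x)$, which upon integration gives $I_f^\eps(p:q)\le (1-\eps)f\!\left(\tfrac{\eps}{1-\eps}\right)+\eps f\!\left(\tfrac{1-\eps}{\eps}\right)$. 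An equivalent route to Eq.~\ref{eq:second} is to invoke the already-established ``Upper bound on $f$-divergences of $w$-mixtures'' applied to the two $w$-mixtures built on the components $\{p,q\}$ with weight vectors $(1-\eps,\eps)$ and $(\eps,1-\eps)$, since then $I_f(w:w')$ is precisely the right-hand side of Eq.~\ref{eq:second}.

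Finally, when $f=f^\diamond$ we have $I_f(q:p)=I_{f^\diamond}(p:q)=I_f(p:q)$, i.e.\ $I_f$ is symmetric; substituting this into Eq.~\ref{eq:first} makes its right-hand side collapse to $(1-\eps)I_f(p:q)+\eps I_f(p:q)=I_f(p:q)$, which is the asserted $I_f^\eps(p:q)\le I_f(p:q)$, and gives the stated interpretation that mixing the endpoints decreases symmetrized $f$-divergences.

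I do not anticipate a genuine obstacle; the argument is essentially bookkeeping, the only thing to organize carefully being the recognition of the two pairings. The one technically delicate point is the pointwise use of the convex-sum inequality on the (possibly positive-measure) set where $p(x)$ or $q(x)$ vanishes, which should be handled by the standard $f$-divergence conventions $0\cdot f(0/0)=0$ and $0\cdot f(u/0)=u\,\lim_{t\to\infty}f(t)/t$ (equivalently, by a routine truncation-and-limit argument), together with the standing hypotheses $\eps\in(0,1)$ and $f$ convex with $f(1)=0$ that keep the divergences and the inequality well defined.
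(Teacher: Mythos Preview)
Your proposal is correct and follows essentially the same approach as the paper: both apply the convex-sum inequality pointwise to $a(x)=m^\eps(p,q)(x)$ and $b(x)=m^\eps(q,p)(x)$ with the very same two pairings you describe (the ``diagonal'' one for Eq.~\ref{eq:first} and the ``horizontal'' one for Eq.~\ref{eq:second}), then integrate. Your write-up is more detailed (the paper does not spell out the symmetric corollary or the boundary conventions) and your alternative derivation of Eq.~\ref{eq:second} via the earlier upper-bound theorem is a nice observation not made explicitly in the paper.
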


\begin{proof}
Apply the convex-sum inequality on
$A \eqdef \{(1-\eps)p(x),\eps q(x)\}$ and $B \eqdef \{(1-\eps)q(x),\eps p(x)\}$, so that  $a=m^\eps(p,q)$ and $b=m^\eps(q,p)$.
First, let $a_0\eqdef (1-\eps)p(x)$, $b_0\eqdef (1-\eps) q(x)$, and $a_1\eqdef \eps q(x)$ and $b_1\eqdef \eps p(x)$.
	We get Ineq.~\ref{eq:first}.
Second, let $a_0\eqdef (1-\eps)p(x)$, $b_0\eqdef \eps p(x)$, and $a_1\eqdef \eps q(x)$ and $b_1\eqdef (1-\eps)q(x)$.
We get Ineq.~\ref{eq:second}.
Note that when $\eps\rightarrow 0$, the second rhs inequality yields $f(0)+0f(\infty)$, similar to $I_f\leq f(0)+\frac{f(\infty)}{\infty}$ of~\cite{Vajda-1987}.
\end{proof}

We can also bound $\KL^\eps(p:q)$ for $\eps\in(0,1)$ as follows:
Let $C_\eps=\max\{\eps,1-\eps\}$ and $c_\eps=\min\{\eps,1-\eps\}$.

\begin{eqnarray*}
\frac{C_\eps (p(x)+q(x))}{c_\eps (p(x)+q(x))}  &\geq& \frac{m^\eps(p,q)(x)}{m^\eps(q,p)(x)} \geq \frac{c_\eps (p(x)+q(x))}{C_\eps (p(x)+q(x))},\\
\frac{C_\eps}{c_\eps}  &\geq& \frac{m^\eps(p,q)(x)}{m^\eps(q,p)(x)} \geq \frac{c_\eps}{C_\eps}>0,
\end{eqnarray*}

Thus since $\KL^\eps(p:q)=\int_\calX m^\eps(p,q)(x)\log\frac{m^\eps(p,q)(x)}{m^\eps(q,p)(x)} \dmu(x)$, we have:

\begin{equation}
 \log \frac{C_\eps}{c_\eps}\geq \KL^\eps(p:q) \geq \log \frac{c_\eps}{C_\eps},\quad \forall \eps\in (0,1)
\end{equation}
since $\int_\calX m^\eps(p,q)(x)\dmu(x)=1$.

We conclude with this theorem:

\begin{theorem}[KL of $\eps$-mixtures is a Bregman divergence]
For any pair of distributions $(p,q)$ and {\em any} $\epsilon>0$ there exist an $\epsilon$-close pair $(p^\eps=m^\eps(p,q),q^\eps=m^\eps(q,p))$ (wrt to total variation) such that
$\KL(p^\eps:q^\eps)$ amount to compute a Bregman divergence.
\end{theorem}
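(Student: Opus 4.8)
The plan is to realise $p^\eps$ and $q^\eps$ as two points of a single order-$1$ $w$-mixture family and then quote Theorem~\ref{lemma:KLetaBD} verbatim. Given $(p,q)$ and $\eps>0$, take the prescribed components $p_0\eqdef p$ and $p_1\eqdef q$ and form the one-dimensional mixture family $\calM=\{m(x;\eta)=(1-\eta)p(x)+\eta q(x)\ :\ \eta\in H^\circ=(0,1)\}$. By definition $p^\eps=m^\eps(p,q)=(1-\eps)p+\eps q=m(x;\eta)$ with $\eta=\eps$, while $q^\eps=m^\eps(q,p)=\eps p+(1-\eps)q=m(x;\eta)$ with $\eta=1-\eps$; both coordinates sit in the open interval $(0,1)$. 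If $p=q$ $\mu$-almost everywhere the claim is vacuous ($p^\eps=q^\eps$, $\KL=0$, a degenerate Bregman divergence), so we may assume $p$ and $q$ linearly independent, in which case $\calM$ is a genuine dually flat manifold and Theorem~\ref{lemma:KLetaBD} applies directly, yielding
$$
\KL(p^\eps:q^\eps)=B_{F^*}(\eps:1-\eps),\qquad F^*(\eta)=\int_\calX m(x;\eta)\log m(x;\eta)\,\dmu(x),
$$
the negative Shannon entropy of the $\eta$-mixture --- precisely the Bregman-divergence form asserted.

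Next I would dispatch the ``$\eps$-close'' clause. From the Total variation continuity theorem above, $\TV(p^\eps,q^\eps)=\TV^\eps(p,q)=|1-2\eps|\,\TV(p,q)\le|1-2\eps|$; more to the point, since $m^\eps(p,q)-p=\eps(q-p)$ and $m^\eps(q,p)-q=\eps(p-q)$, one has $\TV(p^\eps,p)=\eps\,\TV(q,p)\le\eps$ and $\TV(q^\eps,q)=\eps\,\TV(p,q)\le\eps$. Hence the perturbed pair $(p^\eps,q^\eps)$ lies within total-variation distance $\eps$ of $(p,q)$ and tends to it as $\eps\to 0^+$, which is the sense of ``$\eps$-close''.

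The only delicate point --- and the actual content, since $\KL(p:q)$ may itself be $+\infty$ (disjoint supports, heavy tails) --- is checking that $F^*$ is finite, strictly convex and differentiable on a neighbourhood of $[\eps,1-\eps]$, the hypotheses Theorem~\ref{lemma:KLetaBD} relies on. This is already supplied by the pinching bound derived just before the theorem: for every $\eps\in(0,1)$, $c_\eps/C_\eps\le m^\eps(p,q)(x)/m^\eps(q,p)(x)\le C_\eps/c_\eps$ with $C_\eps=\max\{\eps,1-\eps\}$ and $c_\eps=\min\{\eps,1-\eps\}$, so $\KL^\eps(p:q)\le\log(C_\eps/c_\eps)<\infty$; combined with the strict positivity of $(F^*)''(\eta)=\int_\calX (q(x)-p(x))^2/m(x;\eta)\,\dmu(x)$ recorded in the $D=1$ discussion of \S\ref{sec:igdf} and the Leibniz-rule identity $\theta(\eta)=\nabla F^*(\eta)$, all the required regularity holds on $[\eps,1-\eps]$. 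So no fresh computation is needed; I expect the only mild obstacle to be stating this regularity cleanly, and the pinching estimate disposes of it. Alternatively one may simply invoke the identity $D_{F^*,F}(\eta:\theta')=\KL(m(x;\eta):m(x;\eta'))$ established earlier, which is a direct integral identity valid even for unnormalised mixtures and therefore bypasses any worry about $F^*$ being individually infinite.
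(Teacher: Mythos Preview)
Your proposal is correct and follows essentially the same approach as the paper: both identify $p^\eps$ and $q^\eps$ as members of the order-$1$ mixture family on components $\{p,q\}$, invoke Theorem~\ref{lemma:KLetaBD} to get $\KL(p^\eps:q^\eps)=B_{F^*}(\eps:1-\eps)$, and verify $\eps$-closeness via $\TV(p,p^\eps)=\eps\,\TV(p,q)\le\eps$. Your version is simply more careful, adding the degenerate case $p=q$, the linear-independence caveat, and the regularity discussion for $F^*$ via the pinching bound --- all of which the paper's two-line proof leaves implicit.
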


\begin{proof}
The total variation is bounded by $1$, and 
$\TV(p,p^\eps)=\frac{1}{2}\int |p(x)-(1-\eps)p(x)-\eps q(x)| \dmu(x)=\eps \TV(p,q)\leq \eps$.
Since $p^\eps$ and $q^\eps$ are $1$-mixtures, it follows that their Kullback-Leibler divergence corresponds to a Bregman divergence.
\end{proof}

We can obtain a lower bound on the total variation between two $w$-mixtures as follows:
First, consider the inequality $|a-b|\geq |\ |a|-|b|\ |$.
This inequality is easily checked by squaring both sides: $(a-b)^2=a^2+b^2-2ab\geq |\ |a|-|b|\ |^2=a^2+b^2-2|a|\ |b|$ which holds
because $ab\leq |a|\ |b|$.
Thus we have:

\begin{eqnarray}
\TV\left(\sum w_i f_i,\sum w_i' f_i\right) &=& \frac{1}{2} \left| \int (\sum_{i\in I} (w_i-w_i') f_i  - \sum_{i\in [D]\backslash I} (w_i-w_i') f_i) \dmu(x) \right|,\\
&=& \frac{1}{2} \left|   \sum_{i\in I} (w_i-w_i')   - \sum_{i\in [D]\backslash I} (w_i-w_i')  \right| \geq 0,
\end{eqnarray}
where $I$ is the set of indices such that $w_i\geq w_i'$ and $[D]=\{1,\ldots, D\}$.
Thus we have:

\begin{theorem}[Lower bound on the TV between two $w$-mixtures]
The total variation between two $w$-mixtures with fixed components $\{f_i\}$ is lower bounded as follows:
\begin{equation}
\TV\left(\sum w_i f_i,\sum w_i' f_i\right) \geq \frac{1}{2}  \left|  \left|\sum_{i\in I} (w_i-w_i')\right| - 
\left|\sum_{i\in [D]\backslash I} (w_i-w_i')\right|  \right| .
\end{equation}
\end{theorem}

Notice that when the two mixtures $m_1$ and $m_2$ with $k_1$ and $k_2$ components do not share common components, we can view these mixtures as the close of $w$-mixtures for $k=k_1+k_2$ components. The total variation distance has then lower bound $|1-1|=0$.
When the two mixtures are categorical distributions with $k$ components (i.e., bins), we have
\begin{equation}
\TV\left(\sum w_i \delta_{x_i}(x),\sum w_i' \delta_{x_i}(x)\right) = \frac{1}{2} \sum_{i=1}^k |w_i -w_i'| \geq \frac{1}{2}  \left|  \left|\sum_{i\in I} (w_i-w_i')\right| - 
\left|\sum_{i\in [D]\backslash I} (w_i-w_i')\right|  \right| .
\end{equation}

\section{Summary and conclusion}

Let us wrap-up and summarize our contributions as follows: 
We prove that the Kullback-Leibler (KL) divergence between two $w$-mixtures is equivalent to a Bregman divergence for the Bregman convex generator set to the Shannon negentropy (also called Shannon information). 
The induced geometry is a  dually flat manifold in information geometry~\cite{IG-2016,EIG-2018} callled the mixture family manifold.
It follows that the  KL-averaging integration~\cite{DistributedEstimation-EF-2014} of $w$-mixtures can be done optimally:
 This is useful for distributed estimations of $w$-mixtures.
We proved that the $\alpha$-Jensen-Shannon divergences between $w$-mixtures is equivalent to $\alpha$-Jensen divergences on their parameters.
This contrasts with the fact the  $\alpha$-Bhattacharrya divergence between two members of the same exponential family amounts 
to  $\alpha$-Jensen divergences~\cite{BR-2011}. Finally,  
we proved inequalities for the $f$-divergences of $w$-mixtures.

Note that MLE estimation of $w$-mixtures bears similarity with estimation of Cauchy parameters since it involves high-degree polynomial root solving~\cite{cauchymle-1970}.
Efficient inference of $w$-mixtures is studied in a forthcoming paper~\cite{mmix-2017}.

\vskip 0.5cm
A Java\texttrademark{} package for reproducible research implementing $w$-Gaussian Mixture Models ($w$-GMMs) is available at the following home page:\\
\centerline{\url{https://franknielsen.github.io/w-mixtures/}}

\section*{Acknowledgments}

The authors would like to thank  Ga\"etan Hadjeres for carefully reading a preliminary draft.
 
\bibliographystyle{plain}
\bibliography{wmixturefamilyV3BIB}

\appendix

\section{Common statistical distances}\label{sec:commondist}

The table below summarizes the common divergences met in information theory and statistical processing~\cite{js-1991}:
\vskip 0.5cm

\begin{tabular}{ll}
Distance name & formula\\ \hline\hline
Total variation & $\TV(p(x):q(x)) =\frac{1}{2}\int |p(x)-q(x)|\dmu(x)$\\
Kullback-Leibler divergence & $\KL(p(x):q(x)) = \int p(x)\log\frac{p(x)}{q(x)}\dmu(x)$\\
Jeffreys divergence & $J(p(x):q(x)) =\int (p(x)-q(x))\log\frac{p(x)}{q(x)}\dmu(x)$\\
& $= \KL(p(x):q(x)) +\KL(q(x):p(x))$\\
Lin $K$ divergence &  $K(p(x):q(x)) = \int (p(x))\log\frac{2p(x)}{p(x)+q(x)}\dmu(x)$\\
Jensen-Shannon divergence & $\JS(p(x):q(x))= \frac{1}{2} ( K(p(x):q(x)) + K(q(x):p(x)) )$ \\ \hline
\end{tabular}
\vskip 0.5cm
Furthermore, we have the following inequalities~\cite{js-1991}: $K\leq\frac{1}{2}\KL$ (and $K\leq 1$), $\JS\leq\frac{1}{4}J$, and $\JS\leq 2\TV$ (and $\TV\leq 1$).

\section{Extended $f$-divergences and Monte Carlo estimations}\label{sec:extendfdiv}


We define the  {\em extended $f$-divergences} as follows:

\begin{definition}[Extended $f$-divergence]
The extended $f$-divergence for a convex generator $f$, strictly convex at $1$ and satisfying $f(1)=0$ is defined by 
$$
I_f^e(p:q)=\int p(x)\left( f\left(\frac{q(x)}{p(x)}\right)-f'(1)\left(\frac{q(x)}{p(x)}-1\right)   \right)\dmu(x).
$$
\end{definition}

For a strictly convex generator $f$, let us consider the scalar Bregman divergence~\cite{BD-1967}:
\begin{equation}\label{eq:bds}
B_f(a:b)= f(a)-f(b)-(a-b)f'(b)\geq 0.
\end{equation}

Setting $a=\frac{q(x)}{p(x)}$ and $b=1$ in Eq.~\ref{eq:bds}, and using the fact that $f(1)=0$, we get
$$
f\left(\frac{q(x)}{p(x)}\right)-\left(\frac{q(x)}{p(x)}-1\right)f'(1)\geq 0.
$$

Therefore we define the {\em extended $f$-divergences} as 
\begin{equation}
I_f^e(p:q) = \int p(x) B_f\left(\frac{q(x)}{p(x)}:1\right)\dmu(x)\geq 0.
\end{equation}
That is, the formula for the extended $f$-divergences is
\begin{equation}
I_f^e(p:q)= \int p(x) \left( f\left(\frac{q(x)}{p(x)}\right) - f'(1)\left(\frac{q(x)}{p(x)}-1\right) \right)\dmu(x) \geq 0.
\end{equation}

Then we estimate the extended $f$-divergence using importance sampling of the integral with respect to distribution $r$, using $n$ variates $x_1,\ldots, x_n\sim_{\mathrm{iid}} p$ as:

$$
\hat{I}_{f,n}(p:q) =   \frac{1}{n} \sum_{i=1}^n f\left(\frac{q(x_i)}{p(x_i)}\right)-f'(1)\left(\frac{q(x_i)}{p(x_i)}-1\right) \geq 0.
$$
 
%
\section{Recovering statistical distances from Bregman divergences with integral-based Bregman generators}\label{sec:BDStatDiv}

Consider a strictly convex and smooth integral-based Bregman generator $F(\theta)=F_\calP(p_\theta)$ where $\calP=\{p_\theta\ :\ \theta\in\Theta\}$.
 Let $F^*(\eta)$ denote its Legendre-Fenchel convex conjugate.
The dually flat geometry induced by the dual potential functions $F(\theta)$ and $F^*(\eta)$ yields a canonical Legendre-Fenchel divergence
\begin{equation}
L_F(\theta_1:\eta_2):=F(\theta_1)+F^*(\eta_2)-\theta_1^\top\eta_2
\end{equation}
 which amounts to an equivalent Bregman divergence:
\begin{equation}
B_F(\theta_1,\theta_2):=F(\theta_1)-F(\theta_2)-(\theta_1-\theta_2)^\top \nabla F(\theta_2).
\end{equation}

These parametric Legendre-Fenchel/Bregman divergences can be interpreted as an equivalent statistical divergence  between two densities of the exponential family:
We have 
\begin{equation}
D_{F,\calP}[p_{\theta_1}:p_{\theta_2}] := L_F(\theta_1:\eta_2) = B_F(\theta_1:\theta_2).
\end{equation}

Thus by relaxing the distributions $p_{\theta_1}$ and  $p_{\theta_2}$ to {\em arbitrary} distributions $p_1$ and $p_2$, we have recovered a statistical distance $D_F[p_1:p_2]$ from an integral-based Bregman generator $F(\theta)=F_\calP(p_\theta)$.

Let us illustrate this statistical distance construction for two case studies: (1) the exponential family, and (2) the mixture family:

\begin{itemize}
\item Consider an exponential family $\calE$ of order $D$ with densities defined according to a dominating measure $\mu$:
\begin{equation}
\calE=\{p_\theta(x)=\exp(\theta^\top t(x)-F(\theta))\ :\ \theta\in\Theta\},
\end{equation}
where the natural parameter $\theta$ and the sufficient statistic vector $t(x)$ belong to $\bbR^D$.
We have the integral-based Bregman generator:
\begin{equation}
F(\theta)=F_\calE(p_\theta)=\log\left(\int\exp(\theta^\top t(x))\dmu(x)\right),
\end{equation} 
and the dual convex conjugate 
\begin{equation}
F^*(\eta)=-h(p_\theta)=\int p(x)\log p(x)\dmu(x),
\end{equation}
where $h(p)=-\int p(x)\log p(x)\dmu(x)$ denotes Shannon's entropy. 

Let $\lambda(i)$ denotes the $i$-th coordinates of vector $\lambda$.
Let us calculate the {\em inner product} $\theta_1^\top\eta_2=\sum_i \theta_1(i)\eta_2(i)$ of the Legendre-Fenchel divergence.
We have $\eta_2(i)=E_{p_{\theta_2}}[t_i(x)]$.
Using the linear property of the expectation $E[\cdot]$, 
we find that $\sum_i \theta_1(i)\eta_2(i)= E_{p_{\theta_2}}\left[\sum_i \theta_1(i) t_i(x)\right]$.  
Moreover, we have $\sum_i \theta_1(i) t_i(x)=\left(\log p_{\theta_1}(x)\right)+F(\theta_1)$.
It follows that  we have:
\begin{equation}
\theta_1^\top\eta_2 = E_{p_{\theta_2}}\left[\log p_{\theta_1}+F(\theta_1)\right]=F(\theta_1)+E_{p_{\theta_2}}\left[\log p_{\theta_1}\right].
\end{equation}

Thus we get
\begin{eqnarray}
D_{F,\calE}[p_{\theta_1}:p_{\theta_2}] &=& F(\theta_1)+F^*(\eta_2)-\theta_1^\top\eta_2,\\
&=& F(\theta_1)-h(p_{\theta_2})-E_{p_{\theta_2}}[\log p_{\theta_1}]-F(\theta_1),\\
&=& E_{p_{\theta_2}}\left[\log \frac{p_{\theta_2}}{p_{\theta_1}}\right]=:D_{\KL^*}[p_{\theta_1}:p_{\theta_2}],
\end{eqnarray}

By relaxing the densities $p_{\theta_1}$ and $p_{\theta_2}$ to be $p_1$ and $p_2$, we get the {\em reverse KL divergence} between $p_1$ and $p_2$ from the dually flat structure induced by the integral-based log-normalizer of an exponential family:
\begin{equation}
D_{\KL^*}[p_1:p_2]=E_{p_{2}}\left[\log \frac{p_{2}}{p_{1}}\right]=\int p_2(x)\log \frac{p_2(x)}{p_1(x)}\dmu(x).
\end{equation}

Thus, we have recovered the statistical distance $D_{\KL^*}$ from $D_{F,\calE}$.

The dual divergence $D^*[p_1:p_2]:=D[p_2:p_1]$ is obtained by swapping the distribution parameter orders.
We have:
\begin{equation}
D_{\KL^*}^*[p_1:p_2]:=D_{\KL^*}[p_2:p_1]= E_{p_{1}}\left[\log \frac{p_{1}}{p_{2}}\right]=:D_{\KL}[p_1:p_2],
\end{equation}
and $D_{\KL^*}[p_1:p_2]=D_{\KL^*}^*[p_2:p_1]=D_{\KL}[p_2:p_1]$.

To summarize, the canonical Legendre-Fenchel divergence associated with the log-normalizer of an exponential family amounts to
 the statistical reverse Kullback-Leibler divergence between $p_{\theta_1}$ and $p_{\theta_1}$ (or the KL divergence between the swapped corresponding densities): $D_\KL[p_{\theta_1}:p_{\theta_2}]=B_F(\theta_2:\theta_1)=L_F(\theta_2:\eta_1)$. 
Notice that it is easy to check that $D_\KL[p_{\theta_1}:p_{\theta_2}]=B_F(\theta_2:\theta_1)$~\cite{KLEF-2001,Bregman-2005}.
Here, we took the opposite direction by constructing $D_\KL$ from $B_F$.

We may consider an auxiliary carrier term $k(x)$ so that the densities write
$p_\theta(x)=\exp(\theta^\top t(x)-F(\theta)+k(x))$. Then the dual convex conjugate writes~\cite{crossEF-2010} 
as $F^*(\eta)=-h(p_\theta)+E_{p_\theta}[k(x)]$.

\item In this second example, we consider a mixture family 
\begin{equation}
\calM=\left\{m_\theta=\sum_{i=1}^D \theta_i p_i(x)+ (1-\sum_{i=1}^D \theta_i)p_0(x)\right\},
\end{equation}
 where $p_0,\ldots, p_D$ are $D+1$ linearly independent probability densities.
The integral-based Bregman generator $F$ is chosen as Shannon negentropy: 
\begin{equation}
F(\theta)=F_\calM(m_\theta)=-h(m_\theta).
\end{equation}

We have 
\begin{equation}
\eta_i=[\nabla F(\theta)]_i=\int (p_i(x)-p_0(x))\log m_\theta(x)\dmu(x),
\end{equation} 
and the dual convex potential function 
is 
\begin{equation}
F^*(\eta)=-\int p_0(x)\log m_\theta(x)\dmu(x)=h^\times(p_0:m_\theta),
\end{equation}
i.e., the cross-entropy between the density $p_0$ and the mixture $m_\theta$.
Let us calculate the inner product $\theta_1^\top\eta_2$ of the Legendre-Fenchel divergence  as follows:

\begin{eqnarray}
\sum_i \theta_1(i)\int (p_i(x)-p_0(x))\log m_{\theta_2}(x)\dmu(x) &=& 
\int  \sum_i \theta_1(i) p_i(x)\log m_{\theta_2}(x)\dmu(x)  \nonumber \\ 
&& -\sum_i \theta_1(i)  p_0(x)\log m_{\theta_2}(x)\dmu(x).
\end{eqnarray}

That is
$$
\theta_1^\top \eta_2=\int  \sum_i \theta_1(i) p_i\log m_{\theta_2}\dmu-\sum_i \theta_1(i)  p_0\log m_{\theta_2}\dmu
$$

Thus it follows that we have the following statistical distance:
\begin{eqnarray}
D_{F,\calM}[m_{\theta_1}:m_{\theta_2}] &:=& F(\theta_1)+F^*(\eta_2)-\theta_1^\top\eta_2,\\
&=& -h(m_{\theta_1})-\int p_0(x)\log m_{\theta_2}(x)\dmu(x) - 
\int  \sum_i \theta_1(i) p_i(x)\log m_{\theta_2}(x)\dmu(x) \nonumber \\ 
&&+\sum_i \theta_1(i)  p_0(x)\log m_{\theta_2}(x)\dmu(x),\\
&=&-h(m_{\theta_1})-\int ((1-\sum_i \theta_1(i))p_0(x)+\sum_i \theta_1(i)p_i(x))\log m_{\theta_2}(x)\dmu(x),\\
&=& -h(m_{\theta_1})-\int m_{\theta_1}(x)\log m_{\theta_2}(x)\dmu(x),\\
&=& \int  m_{\theta_1}(x)\log \frac{m_{\theta_1}(x)}{m_{\theta_2}(x)} \dmu(x),\\
&=& D_\KL[m_{\theta_1}:m_{\theta_2}].
\end{eqnarray}

Thus we have $D_\KL[m_{\theta_1}:m_{\theta_2}]=B_F(\theta_1:\theta_2)$.
By relaxing the mixture densities $m_{\theta_1}$ and $m_{\theta_2}$ to arbitrary densities $m_1$
 and $m_2$, we find that the dually flat geometry induced by the negentropy of densities of a mixture family induces a 
statistical distance which corresponds to the KL divergence.
That is, we have recovered the statistical distance $D_{\KL}$ from $D_{F,\calM}$.
\end{itemize}

Let us define the full dual parameter Legendre-Fenchel divergence as:
\begin{equation}\label{eq:LFsecond}
L_F(\theta_1,\eta_1:\theta_2,\eta_2) := F(\theta_1) + F^*(\eta_2) - \nabla F^*(\eta_1)^\top \nabla F(\theta_2).
\end{equation}

The purpose of introducing this expression is as follows:
When $F$ is an integral-based generator, the expression of Eq.~\ref{eq:LFsecond} can be expressed as an integral form.
Notice that it requires to explicit the gradient $\nabla F^*$ of the convex conjugate using an integral form.

\end{document}